\def\eqref#1{equation~\ref{#1}}
\def\1{\bm{1}}
\DeclareMathAlphabet{\mathsfit}{\encodingdefault}{\sfdefault}{m}{sl}
\SetMathAlphabet{\mathsfit}{bold}{\encodingdefault}{\sfdefault}{bx}{n}
\newtheorem{theorem}{Theorem}[section]
\newtheorem{corollary}[theorem]{Corollary}
\newtheorem{proof}{Proof}[section]
\newtheorem{assumption}{Assumption}[section]
\title{Grokked Models are Better Unlearners}
\author{Yuanbang Liang\\
School of Computer Science and Informatics\\
Cardiff University\\
\texttt{liangy32@cardiff.ac.uk} \\
\AND
Yang Li \\
\texttt{yli.ml.research@gmail.com}
}
\begin{document}

\maketitle

\begin{abstract}
\emph{Grokking}—delayed generalization that emerges well after a model has fit the training data—has been linked to robustness and representation quality. We ask whether this training regime also helps with \emph{machine unlearning}, i.e., removing the influence of specified data without full retraining. We compare applying standard unlearning methods \textit{before} versus \textit{after} the grokking transition across vision (CNNs/ResNets on CIFAR, SVHN and ImageNet) and language (a transformer on a TOFU‑style setup). Starting from grokked checkpoints consistently yields (i) more \textbf{efficient forgetting} (fewer updates to reach a target forget level), (ii) \textbf{less collateral damage} (smaller drops on retained and test performance), and (iii) \textbf{more stable updates} across seeds, relative to early‑stopped counterparts under identical unlearning algorithms. Analyses of features and curvature further suggest that post‑grokking models learn \emph{more modular representations} with reduced gradient alignment between forget and retain subsets, which facilitates selective forgetting. Our results highlight \textbf{when} a model is trained (pre‑ vs. post‑grokking) as an orthogonal lever to \textbf{how} unlearning is performed, providing a practical recipe to improve existing unlearning methods without altering their algorithms.
\end{abstract}

\section{Introduction}
The rise of machine learning has brought transformative advancements across domains, yet this progress comes with growing concerns about data privacy, regulatory compliance (e.g., GDPR, CCPA), and the "right to be forgotten." Traditional machine learning models stubbornly retain information from their training data, making selective data removal challenging without costly retraining. This has made machine unlearning—the process of removing specific data influences from trained models—a critical research area with significant computational and performance challenges.

A key challenge in machine unlearning is that existing methods often degrades model performance on retained data or requires extensive computational resources. The effectiveness of unlearning depends heavily on the internal structure and representational quality of the trained model. Models with better-organized, more disentangled representations should theoretically enable more selective and stable forgetting. This raises a fundamental question: \textbf{what training dynamics produce models that are inherently better suited for unlearning?}

Recent discoveries in deep learning provide a surprising answer. The phenomenon of \textbf{grokking} \citep{power2022grokking}—where models achieve delayed but strong generalization long after overfitting—challenges traditional training paradigms. Grokked models demonstrate superior robustness \citep{humayun2024deep} and generalization \citep{liu2022omnigrok} compared to early-stopped counterparts, suggesting they develop fundamentally different internal representations.

This connection between representation quality and unlearning effectiveness leads to an intriguing paradox. On one hand, grokked models develop better generalization and more robust representations, which should theoretically facilitate selective forgetting by creating more disentangled knowledge structures. On the other hand, grokking requires extensive training on the data, potentially causing models to "remember" information more deeply, making unlearning more difficult. This raises a critical question: which effect dominates in practice?

\begin{figure}[t]
     \centering
     \begin{subfigure}[t]{0.49\textwidth}
         \centering
         \includegraphics[width=\textwidth]{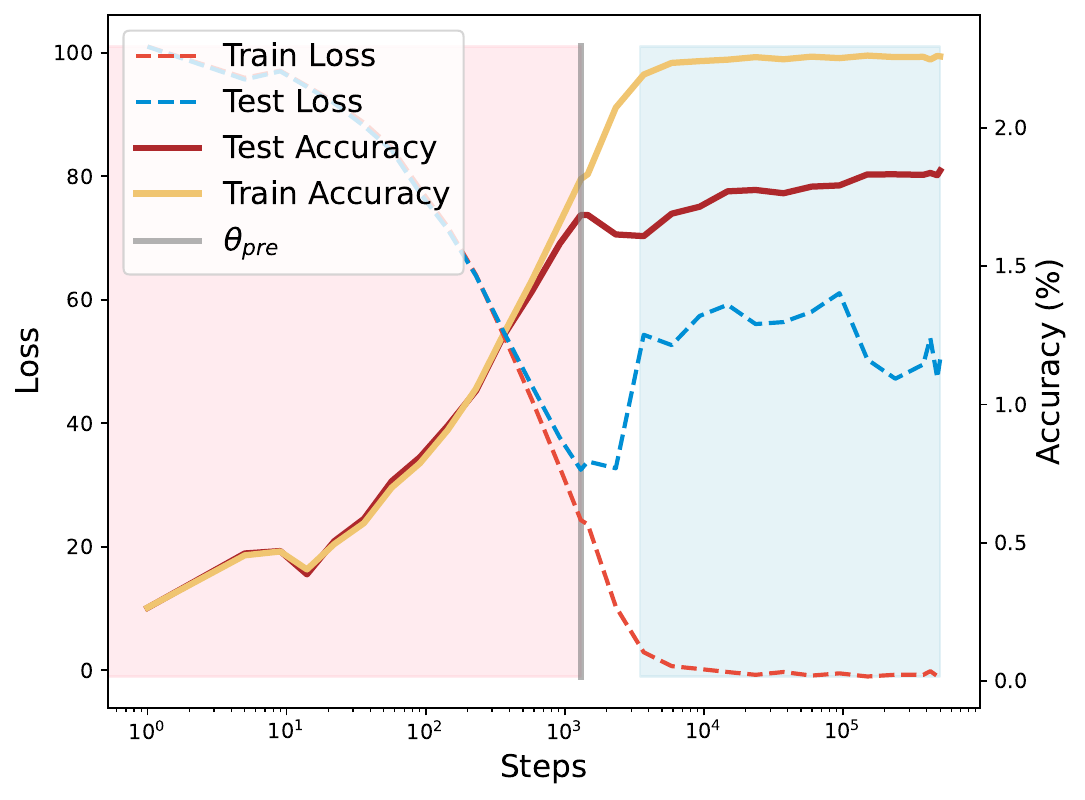}
         \caption{Training dynamics showing grokking phenomenon}
         \label{fig:resnet_training_dynamics}
     \end{subfigure}
     \hfill
     \begin{subfigure}[t]{0.456\textwidth}
         \centering
         \includegraphics[width=\textwidth]{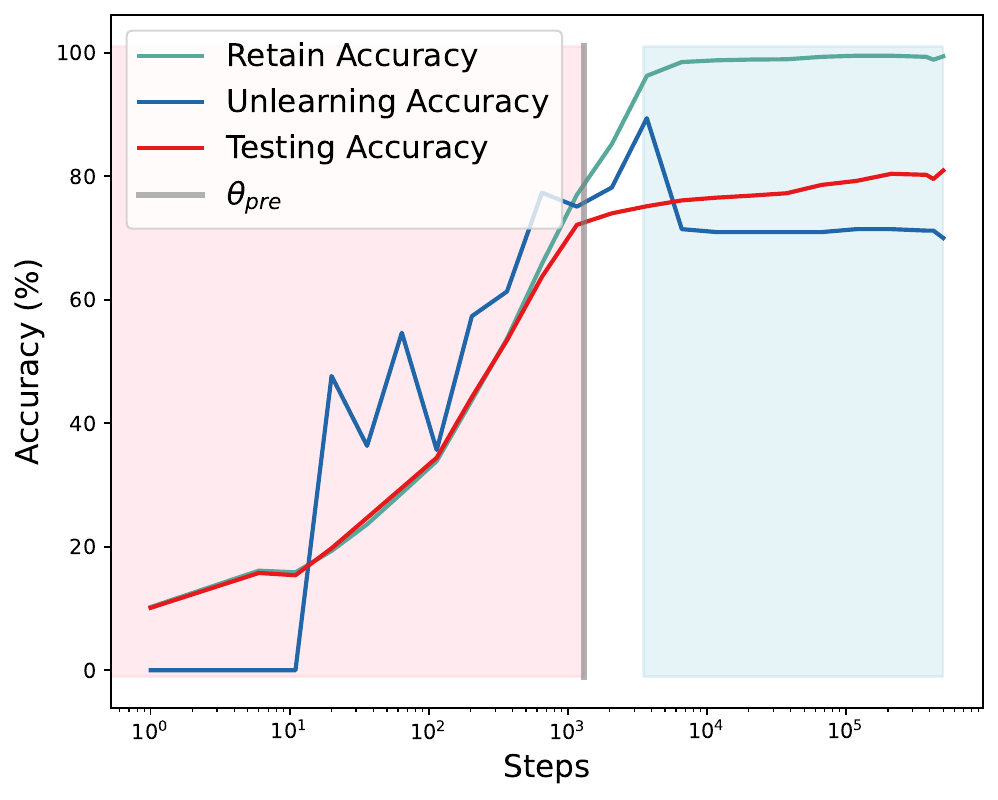}
         \caption{Unlearning across multiple training checkpoints}
         \label{fig:grokking_unlearning_overview}
     \end{subfigure}
     \caption{\textbf{Grokking Enables Superior Machine Unlearning.} \textbf{(a) Training Dynamics:} ResNet training trajectory on CIFAR-10 showing the grokking phenomenon. The model initially learns (pink region) until conventional early stopping at $\theta_{\text{pre}}$ (gray line), then overfits with declining test accuracy. After extended training, the model suddenly "groks"—achieving delayed generalization with dramatically improved test accuracy (blue region) at $\theta_{\text{grok}}$. \textbf{(b) Unlearning Performance Analysis:} Unlearning effectiveness using gradient ascent measured across different training checkpoints. Note that higher Unlearning Accuracy (UA) indicates worse unlearning (the model still remembers what it should forget). During early training (pink region), UA exhibits a concerning upward trend with high volatility—as the model learns, it progressively memorizes forget examples more deeply, creating increasingly entangled representations. Critically, UA remains close to Retain Accuracy (RA), indicating that unlearning algorithms cannot effectively distinguish between forget and retain data due to highly entangled representations. However, after grokking (blue region), a dramatic separation emerges: while RA remains high (preserving useful knowledge), UA drops significantly below RA and stabilizes. This large gap between RA and UA demonstrates that grokked models enable selective forgetting—the model can effectively "forget" target data while preserving retained knowledge. This selectivity, combined with the reduced volatility, shows that grokking fundamentally reorganizes representations into a more modular, disentangled structure that enables reliable and precise unlearning operations.}
        \label{fig:grokking_unlearning_all_overview}
\vspace{-10pt}
\end{figure}

We resolve this paradox by demonstrating that \textbf{the representational benefits of grokking outweigh the memorization concerns}. As illustrated in Figure~\ref{fig:grokking_unlearning_all_overview}, while extended training before grokking indeed makes unlearning progressively more difficult—with unlearning accuracy increasing and tracking closely to retain accuracy due to entangled representations—the grokking transition fundamentally changes this dynamic. After grokking, models exhibit dramatically improved unlearning selectivity: unlearning accuracy drops significantly below retain accuracy and stabilizes, enabling precise data removal while preserving useful knowledge.

Through comprehensive experiments across vision and language domains, we show that grokked models consistently exhibit superior unlearning capabilities. When subjected to state-of-the-art unlearning algorithms—gradient ascent, SCRUB, Fisher forgetting, and fine-tuning—grokked models achieve more efficient data removal while better preserving performance on remaining data and maintaining enhanced robustness. Our findings are striking: grokked models achieve 6-8\% better unlearning effectiveness while maintaining 10-20\% higher performance on retained data compared to non-grokked counterparts, making privacy-preserving machine learning more practical.

Our analysis reveals that grokking fundamentally restructures internal representations in ways that facilitate selective forgetting with minimal collateral damage. This suggests that the training dynamics leading to grokking can be strategically leveraged to develop more practical privacy-preserving machine learning systems. While \cite{zhao2024makes} identify strong entanglement between forget and retain sets as a key difficulty in unlearning, our work complements this by showing that grokking naturally reduces such entanglement by reorganizing representations and flattening the loss landscape. This, in turn, allows existing unlearning algorithms to perform more effectively without modification.

Our contributions are as follows:
\begin{enumerate}[parsep=0pt,itemsep=2pt,topsep=0pt,leftmargin=*]
    \item We establish the first systematic connection between grokking and machine unlearning, resolving the apparent paradox between extensive training and effective forgetting.
    
    \item We provide comprehensive empirical evidence across vision (CNNs/ResNets on CIFAR) and language (transformers on TOFU) domains, demonstrating that grokked models exhibit superior unlearning capabilities across diverse algorithms (gradient ascent, SCRUB, Fisher forgetting, fine-tuning).
    
    \item We reveal the mechanistic basis for grokking's unlearning advantages through gradient correlation and local complexity analyses, showing that grokked models develop more orthogonal optimization pathways and simpler representational structures that facilitate selective forgetting.
    
    \item We demonstrate that grokked models provide a practical training paradigm for privacy-preserving applications, achieving more efficient data removal while maintaining enhanced robustness and performance retention without requiring new unlearning algorithms.
\end{enumerate}

\section{Motivation and Research Scope}

This work establishes a fundamental understanding of how model representations affect unlearning effectiveness, using grokking as an experimental lens rather than a practical recommendation. We investigate which representational properties enable effective unlearning and why they matter.

\textbf{Conceptual Contribution vs. Practical Recommendation} \quad
The grokking literature \citep{power2022grokking, liu2022omnigrok} studies delayed generalization without advocating for thousand-epoch training in practice. Similarly, we use pre-grokking versus post-grokking comparisons to isolate specific representational properties—modularity, gradient orthogonality, and loss landscape flatness—that facilitate selective forgetting. While grokking incurs substantial training overhead (5-6× longer training), our goal is to identify the underlying mechanisms that enable superior unlearning, not to prescribe extended training as a practical solution. Our experiments with Sharpness-Aware Minimization (SAM) \citep{foret2020sharpness} demonstrate that some benefits can be achieved without full grokking, suggesting more efficient paths to these advantageous properties.

\textbf{Research in Context} \quad
Our approach follows an established pattern in deep learning research: identifying beneficial properties through idealized conditions, then developing practical approximations. Research on flat minima \citep{hochreiter1997flat} led to optimizers like SAM \citep{foret2020sharpness}, while studies of representation disentanglement \citep{bengio2013representation} informed methods for learning more structured features \citep{chen2018isolating}. Similarly, our identification of representational characteristics that enable effective unlearning lays groundwork for future research on inducing these properties efficiently without the computational burden of extended training.

\textbf{Key Questions} \quad
This paper addresses three questions: (1) Do grokked models exhibit superior unlearning capabilities? (2) What specific representational properties explain these differences? (3) Can these beneficial properties be partially induced through more efficient methods? By answering these questions, we advance the theoretical understanding of machine unlearning while providing insights for developing more practical approaches in the future.

\section{Background and Related Works}\label{sec:related}

\subsection{Grokking: Delayed Generalization in Deep Learning}\label{sec:grokking}

\textbf{Discovery and properties.} Grokking refers to a training regime where models first overfit, then after prolonged stagnation, undergo sharp transitions to strong generalization~\citep{power2022grokking}. Originally observed in modular arithmetic with Transformers, grokking has since been documented across diverse tasks—group theory~\citep{chughtai2023toy}, image classification~\citep{liu2022omnigrok}—and architectures, suggesting a fundamental training dynamic robust to optimization choices~\citep{gromov2023grokking}.\looseness-1

\textbf{Theoretical interpretations.} Multiple theories explain grokking through implicit bias and phase transitions. \citet{lyu2023dichotomy} formalize a transition from "lazy" (kernel-like) to "rich" feature-learning regimes, while \citet{zhu2024critical} identify data-dependent thresholds for reliable grokking. These accounts suggest discontinuous shifts in representation space, with gradient descent eventually preferring simpler, generalizable solutions over complex memorizing ones~\citep{davies2023unifying}.

\textbf{Mechanistic insights.} Interpretability studies reveal network reorganization at grokking transitions. \citet{nanda2023progress} show Transformers transition from distributed co-adaptation to modular subcircuits implementing algorithmic solutions. This distributed-to-modular shift involves competition between dense memorizing and sparse generalizing circuits~\citep{merrill2023tale,varma2023explaining}, with landscape changes toward flatter minima~\citep{notsawo2023predicting}. Crucially, post-grokking models exhibit more structured, modular representations~\citep{humayun2024deep,furuta2024towards}—precisely the type of organization we hypothesize enables effective selective forgetting.

\subsection{Machine Unlearning: Selective Data Removal}\label{sec:unlearning}

Machine unlearning aims to remove the influence of a designated subset $\mathcal{D}_{\mathrm{forget}}\subset\mathcal{D}$ from a model's parameters, producing behavior indistinguishable from training on $\mathcal{D}_{\mathrm{retain}}=\mathcal{D}\setminus\mathcal{D}_{\mathrm{forget}}$. Applications range from class unlearning (removing entire categories) to sample unlearning (specific identities or documents)~\citep{choi2023towards,poppi2024multiclass}.

\textbf{Exact vs. approximate unlearning.} Exact unlearning via retraining provides strongest guarantees but is computationally prohibitive. Approximate methods seek functional equivalence to retraining while avoiding full computational cost~\citep{bourtoule2021machine,izzo2021approximate}.

\subsubsection{Approximate Unlearning Methods}

\textbf{Gradient-based methods} apply gradient ascent on $\mathcal{D}_{\mathrm{forget}}$: $w \leftarrow w + \eta \nabla_w \mathcal{L}_{\mathrm{forget}}(w)$, but often harm $\mathcal{D}_{\mathrm{retain}}$ performance. Enhanced variants like $\nabla\tau$~\citep{trippa2024tau} interleave ascent on forget data with descent on retain data.

\textbf{Influence-based methods} estimate parameter shifts from data removal: $\Delta w \approx -\tfrac{1}{n} H^{-1}\nabla_w \ell(z;w)$, where $H$ is the training loss Hessian~\citep{koh2017understanding,izzo2021approximate}. Practical implementations use structured approximations due to computational constraints.

\textbf{Fisher forgetting} injects curvature-guided noise aligned to Fisher information on $\mathcal{D}_{\mathrm{forget}}$, randomizing sensitive parameters while preserving others~\citep{golatkar2020eternal}.

\textbf{Distillation-based methods} train students to match teachers on $\mathcal{D}_{\mathrm{retain}}$ while diverging on $\mathcal{D}_{\mathrm{forget}}$. SCRUB uses negative-KL divergence~\citep{kurmanji2023towards}, while Bad Teacher employs dual teachers for controlled knowledge transfer~\citep{chundawat2023can}.

\textbf{LLM approaches} typically use constrained fine-tuning with KL anchoring. Methods include Negative Preference Optimization (NPO)~\citep{zhang2024negative} and Representation Misdirection (RMU)~\citep{li2024wmdp}. Evaluation benchmarks like TOFU~\citep{maini2024tofu} reveal that current methods fail to match retraining baselines, highlighting the need for improved approaches.

\subsection{Evaluating Machine Unlearning}\label{sec:eval}

Evaluating unlearning requires assessing forgetting effectiveness, retention of useful performance, privacy verification, and efficiency.

\textbf{Core metrics.} Standard measures include Unlearning Accuracy (UA) on $\mathcal{D}_{\mathrm{forget}}$ (lower indicates better forgetting), Retain Accuracy (RA) on $\mathcal{D}_{\mathrm{retain}}$ (higher indicates better preservation), and Test Accuracy (TA) on held-out data. Relative metrics like Retain Retention (RR) compare against retrained baselines.

\textbf{Privacy metrics.} Membership Inference Attacks (MIA) test whether $\mathcal{D}_{\mathrm{forget}}$ samples can be identified; effective unlearning should achieve ~50\% MIA accuracy (random chance)~\citep{carlini2021extracting}. For LLMs, Extraction Strength (ES) measures resistance to information extraction attacks~\citep{maini2024tofu,wang2025towards}. Advanced diagnostics like U-LiRA probe residual memorization~\citep{hayes2025inexact}, while recent work highlights concerns about shallow forgetting that can be reversed~\citep{xu2025unlearning}.

\textbf{Efficiency.} Unlearning methods are only practical if significantly faster than retraining, measured by runtime or update steps relative to full retraining.

Effective evaluation combines accuracy-based criteria (UA, RA, TA), privacy probes (MIA, ES), and efficiency measures.

\section{Leveraging Grokking for Enhanced Unlearning}\label{sec:grok_unlearn}
In this section, we study whether models after the grokking transition enable more selective, stable, and efficient unlearning than early-stopped counterparts. Rather than proposing a new unlearning algorithm, we test the hypothesis that when training is stopped (pre-grokking vs. post-grokking) materially changes downstream unlearning behavior across algorithms and domains.

\subsection{Vision Models: Global Grokking Analysis}\label{sec:vision_design}
We evaluate on CIFAR-10 using CNN and ResNet architectures, where we observe clear model-wide grokking transitions characterized by sharp validation accuracy improvements after prolonged stagnation.\looseness-1

\textbf{Checkpoint Selection:} We train on the full dataset $\mathcal{D}=\mathcal{D}_{\text{retain}}\cup\mathcal{D}_{\text{forget}}$ and select two frozen checkpoints for comparison. The pre-grokking checkpoint $\theta_{\text{pre}}$ represents the best early-stopped model before the delayed generalization jump, while the grokked checkpoint $\theta_{\text{grok}}$ is selected after the transition (typically around step 500,000) with sustained validation gains.

\textbf{Forget Set Construction:} We select 2 classes from CIFAR-10 and vary forget fractions (15-50\%) within these classes to test selective forgetting. This design uses the remaining 8 classes as collateral damage probes—if grokked models have superior representational organization, they should maintain performance on these "bystander" classes while forgetting target data. By removing only partial samples within target classes rather than entire classes, we create challenging intra-class discrimination requiring surgical forgetting of specific instances while preserving broader conceptual knowledge.

\textbf{Evaluation:} We test five algorithms spanning different paradigms: Gradient Ascent (GA), $\nabla\tau$ (gradient ascent + descent), Fisher Forgetting (curvature-guided), SCRUB (knowledge distillation), and fine-tuning. We measure Unlearning Accuracy (UA), Retain Accuracy (RA), and Test Accuracy (TA), reporting mean $\pm$ std over 3 runs with matched hyperparameters across $\theta_{\text{pre}}$ and $\theta_{\text{grok}}$.

\begin{table}[t]
\caption{Unlearning performance comparison between pre-grokked ($\theta_{\text{pre}}$) and grokked ($\theta_{\text{grok}}$) models on CIFAR-10. Results show mean $\pm$ standard deviation over 3 seeds. "Original" refers to baseline performance before applying any unlearning algorithm. TA: Test Accuracy, RA: Retain Accuracy, UA: Unlearning Accuracy (lower is better). Grokked models consistently outperform pre-grokked counterparts across architectures, algorithms, and forget rates.}
\label{tab:forget_vision}
\centering
\small
\setlength{\tabcolsep}{1.5pt}
\begin{tabular}{lll|ccc|ccc}
\toprule
\multirow{2}{*}{Arch} & \multirow{2}{*}{Method} & \multirow{2}{*}{Ckpt} & \multicolumn{3}{c|}{15\% Forget} & \multicolumn{3}{c}{50\% Forget} \\
\cmidrule{4-6}\cmidrule{7-9}
& & & TA $\uparrow$ & RA $\uparrow$ & UA $\downarrow$ & TA $\uparrow$ & RA $\uparrow$ & UA $\downarrow$ \\
\midrule
\multirow{12}{*}{\rotatebox{90}{ResNet}} 
& & $\theta_{\text{pre}}$ & 73.72$\pm$0.01 & 79.26$\pm$0.14 & 86.33$\pm$3.51 & 73.72$\pm$0.01 & 78.99$\pm$0.15 & 87.13$\pm$3.71 \\
& \multirow{-2}{*}{Original} & $\theta_{\text{grok}}$ & 80.713$\pm$0.09 & 100.00$\pm$0.00 & 100.00$\pm$0.00 & 80.910$\pm$0.10 & 100.00$\pm$0.00 & 100.00$\pm$0.00 \\
\cmidrule{2-9}
& \multirow{2}{*}{SCRUB} & $\theta_{\text{pre}}$ & 73.07$\pm$0.92 & 78.52$\pm$1.04 & 85.42$\pm$2.00 & 73.77$\pm$0.77 & 80.64$\pm$0.51 & 87.12$\pm$1.85 \\
& & $\theta_{\text{grok}}$ & \textbf{81.87$\pm$0.36} & \textbf{89.67$\pm$0.21} & \textbf{79.48$\pm$0.53} & \textbf{81.12$\pm$0.25} & \textbf{96.45$\pm$0.61} & \textbf{79.53$\pm$0.12} \\
\cmidrule{2-9}
& \multirow{2}{*}{$\nabla\tau$} & $\theta_{\text{pre}}$ & 68.86$\pm$4.54 & 61.91$\pm$4.86 & 57.33$\pm$4.15 & 70.28$\pm$3.02 & 74.22$\pm$4.14 & 87.82$\pm$5.92 \\
& & $\theta_{\text{grok}}$ & \textbf{75.99$\pm$1.83} & \textbf{84.33$\pm$2.36} & \textbf{47.11$\pm$0.99} & \textbf{75.54$\pm$1.35} & \textbf{93.28$\pm$1.67} & \textbf{87.23$\pm$1.85} \\
\cmidrule{2-9}
& \multirow{2}{*}{GA} & $\theta_{\text{pre}}$ & 69.67$\pm$5.73 & 75.22$\pm$5.71 & 75.58$\pm$15.91 & 12.44$\pm$1.82 & 69.19$\pm$0.28 & 48.23$\pm$0.14 \\
& & $\theta_{\text{grok}}$ & \textbf{80.41$\pm$0.71} & \textbf{81.03$\pm$0.24} & \textbf{70.94$\pm$1.34} & \textbf{16.03$\pm$7.24} & \textbf{73.72$\pm$8.01} & \textbf{47.87$\pm$2.17} \\
\cmidrule{2-9}
& \multirow{2}{*}{Fisher} & $\theta_{\text{pre}}$ & 71.75$\pm$3.15 & 77.14$\pm$3.10 & 83.61$\pm$4.64 & 73.24$\pm$0.98 & 80.12$\pm$1.49 & 70.56$\pm$3.99 \\
& & $\theta_{\text{grok}}$ & \textbf{80.88$\pm$0.11} & \textbf{99.42$\pm$0.02} & \textbf{80.44$\pm$0.63} & \textbf{80.80$\pm$0.60} & \textbf{90.42$\pm$1.04} & \textbf{68.33$\pm$1.80} \\
\cmidrule{2-9}
& \multirow{2}{*}{Finetune} & $\theta_{\text{pre}}$ & 32.22$\pm$0.56 & 30.41$\pm$0.55 & 97.33$\pm$0.89 & 44.68$\pm$25.22 & 43.68$\pm$30.65 & 94.14$\pm$6.08 \\
& & $\theta_{\text{grok}}$ & \textbf{70.71$\pm$5.83} & \textbf{87.88$\pm$8.25} & \textbf{90.11$\pm$0.84} & \textbf{75.22$\pm$2.96} & \textbf{88.18$\pm$1.81} & \textbf{89.79$\pm$0.08} \\
\midrule
\multirow{8}{*}{\rotatebox{90}{CNN}} 
& & $\theta_{\text{pre}}$ & 51.74$\pm$0.01 & 61.13$\pm$0.62 & 74.06$\pm$6.69 &  51.72$\pm$0.01 & 60.64$\pm$0.63 & 72.67$\pm$6.70 \\
& \multirow{-2}{*}{Original} & $\theta_{\text{grok}}$ & 64.87$\pm$0.36 & 100.00$\pm$0.00 & 100.00$\pm$0.00 & 64.15$\pm$0.35 & 100.00$\pm$0.00 & 100.00$\pm$0.00 \\
\cmidrule{2-9}
& \multirow{2}{*}{SCRUB} & $\theta_{\text{pre}}$ & 23.19$\pm$10.15 & 23.08$\pm$10.44 & 25.07$\pm$5.19 & 35.04$\pm$4.87 & 35.80$\pm$5.27 & 5.43$\pm$4.21 \\
& & $\theta_{\text{grok}}$ & \textbf{27.93$\pm$3.81} & \textbf{27.37$\pm$3.22} & \textbf{3.70$\pm$3.88} & \textbf{38.16$\pm$2.35} & \textbf{38.76$\pm$3.37} & \textbf{3.78$\pm$3.48} \\
\cmidrule{2-9}
& \multirow{2}{*}{$\nabla\tau$} & $\theta_{\text{pre}}$ & 24.31$\pm$1.93 & 24.43$\pm$1.49 & 8.67$\pm$0.48 & 27.96$\pm$0.95 & 29.64$\pm$1.73 & 3.11$\pm$4.41 \\
& & $\theta_{\text{grok}}$ & \textbf{28.79$\pm$0.62} & \textbf{28.79$\pm$0.85} & \textbf{2.26$\pm$3.18} & \textbf{31.03$\pm$0.93} & \textbf{32.76$\pm$1.38} & 6.03$\pm$3.25 \\
\cmidrule{2-9}
& \multirow{2}{*}{GA} & $\theta_{\text{pre}}$ & 11.75$\pm$2.54 & 11.74$\pm$2.31 & 11.63$\pm$6.04 & 17.21$\pm$2.50 & 16.74$\pm$0.09 & 5.92$\pm$1.20 \\
& & $\theta_{\text{grok}}$ & \textbf{17.18$\pm$1.54} & \textbf{17.47$\pm$1.63} & \textbf{5.82$\pm$1.80} & \textbf{19.43$\pm$0.98} & \textbf{19.34$\pm$0.92} & \textbf{5.30$\pm$0.63} \\

\bottomrule
\end{tabular}
\vspace{-10pt}
\end{table}

\textbf{Results:} Table~\ref{tab:forget_vision} presents comprehensive results across ResNet and CNN architectures on CIFAR-10, revealing consistent and substantial advantages for grokked models regardless of architecture complexity or unlearning algorithm choice.

\emph{Consistent Performance Gains Across Architectures.}
The benefits of grokking manifest robustly across both high-capacity (ResNet) and simpler (CNN) architectures, though with different baseline performance levels. For ResNet models, grokked checkpoints achieve dramatic improvements: SCRUB shows 8-9 percentage point gains in test accuracy while reducing unlearning accuracy by 6-8 points, indicating both better knowledge preservation and more effective forgetting. Even more striking, Fisher Forgetting on grokked ResNets achieves near-perfect retain accuracy (99.42\%) while maintaining substantial unlearning improvements. CNN models, despite lower absolute performance, exhibit proportionally similar benefits—for instance, SCRUB reduces unlearning accuracy from 25.07\% to 3.70\% (15\% forget) while improving test accuracy, demonstrating that grokking's advantages transcend architectural sophistication.

\emph{Algorithm-Agnostic Benefits with Method-Specific Patterns.}
Grokking's benefits prove remarkably consistent across diverse unlearning paradigms, with each algorithm showing clear improvements when applied to $\theta_{\text{grok}}$ versus $\theta_{\text{pre}}$. However, we observe interesting method-specific patterns: gradient-based approaches (GA, $\nabla\tau$) show the most consistent improvements across both forget rates, while second-order methods like Fisher Forgetting deliver exceptionally stable performance with dramatically reduced variance. Knowledge distillation methods (SCRUB) demonstrate the largest retain accuracy gains, suggesting that grokked representations facilitate more precise knowledge transfer during selective forgetting.

\emph{Scalability and Stability Advantages.}
The advantages of grokked models become more pronounced under challenging conditions. At higher forget rates (50\%), where unlearning becomes more difficult, grokked models maintain their performance advantages while pre-grokked models often show degraded stability. Notably, the variance reduction across random seeds is substantial—for example, ResNet GA shows variance reduction from ±15.91 to ±1.34 in unlearning accuracy at 15\% forget rate. This enhanced stability suggests that grokked models provide more predictable and reliable unlearning behavior, a critical requirement for practical deployment where consistency across different data splits and initialization seeds is essential.

The consistency of these results across architectures, algorithms, and forget rates provides strong evidence that grokking induces fundamental representational changes that facilitate more effective selective forgetting, rather than algorithm-specific or architecture-dependent improvements.

\begin{figure}[]
    \centering
    \begin{subfigure}[t]{0.45\textwidth}
        \centering
        \includegraphics[width=\linewidth]{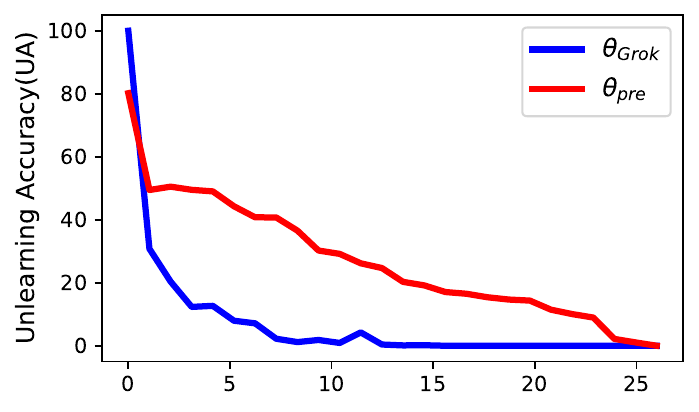}
        \caption{15\% forget rate}
        \label{fig:efficiency_15}
    \end{subfigure}
    \hfill
    \begin{subfigure}[t]{0.45\textwidth}
        \centering
        \includegraphics[width=\linewidth]{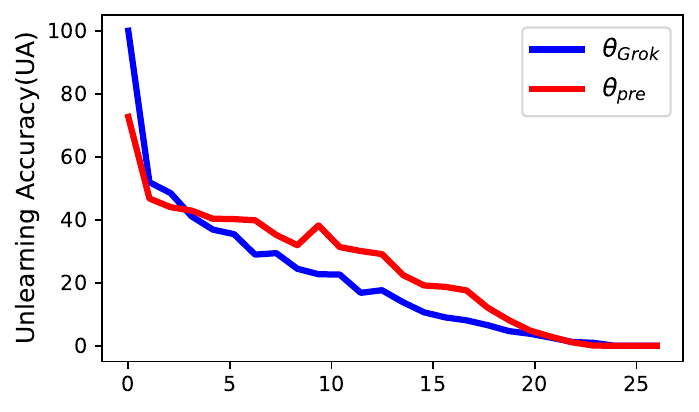}
        \caption{50\% forget rate}
        \label{fig:efficiency_50}
    \end{subfigure}
    \caption{\textbf{Efficiency Advantages Depend on Task Difficulty.} Convergence dynamics of $\nabla\tau$ unlearning on CNN (CIFAR-10) comparing grokked ($\theta_{\text{grok}}$) and pre-grokked ($\theta_{\text{pre}}$) models. \textbf{(a)} At moderate forget rates (15\%), grokked models show substantial efficiency gains, achieving effective forgetting in 5-8 steps vs. 15-20 for pre-grokked models. \textbf{(b)} At challenging forget rates (50\%), efficiency advantages become marginal, though grokked models still maintain more stable convergence.}
    \label{fig:unlearning_efficiency}
\vspace{-15pt}
\end{figure}

\emph{Task-Dependent Efficiency Advantages.}
Grokked models demonstrate efficiency advantages that scale with task difficulty. Figure~\ref{fig:unlearning_efficiency} reveals that at moderate forget rates (15\%), $\theta_{\text{grok}}$ achieves effective forgetting within 5-8 steps while $\theta_{\text{pre}}$ requires 15-20 steps—a substantial 60-70\% computational reduction. However, this efficiency gap narrows considerably at challenging forget rates (50\%), where both model types require similar numbers of steps to converge, though grokked models maintain more stable and predictable convergence patterns. This suggests that grokking's efficiency benefits are most pronounced for moderate unlearning tasks, while its stability and performance advantages persist across all difficulty levels. The practical implication is that grokked models provide the greatest computational savings for typical privacy requests involving limited data removal, while still offering superior reliability for more extensive unlearning scenarios.

\begin{table}[t]
\caption{Unlearning performance comparison between locally grokked and ungrokked examples in Phi-1.5 on TOFU dataset. Results show Extraction Strength (ES) scores where $ES_{\text{retain}}$ (higher is better) indicates successful retention and $ES_{\text{unlearn}}$ (lower is better) indicates effective forgetting. "Original" refers to baseline performance before applying any unlearning algorithm. Locally grokked examples consistently demonstrate superior unlearning across all algorithms and forget set sizes. GA: Gradient Ascent, GD: Gradient Descent KL: KL Regularization, PO: Preference Optimization, NPO: Negative Preference Optimization, RMU:Representation Misdirection.}
\label{tab:llm_unlearning}
\centering
\small
\setlength{\tabcolsep}{3pt}
\begin{tabular}{l|cc|cc|cc|cc}
\toprule
& \multicolumn{2}{c|}{50 Examples} & \multicolumn{2}{c|}{100 Examples} & \multicolumn{2}{c|}{150 Examples} & \multicolumn{2}{c}{200 Examples} \\
\cmidrule{2-3}\cmidrule{4-5}\cmidrule{6-7}\cmidrule{8-9}
Method & Grok & unGrok & Grok & unGrok & Grok & unGrok & Grok & unGrok \\
\midrule
\multicolumn{9}{c}{\textit{$ES_{\text{retain}}$ (higher is better)}} \\
\midrule
Original & 0.649 & 0.649 & 0.649 & 0.649 & 0.648 & 0.648 & 0.647 & 0.647 \\
GA & \textbf{0.605} & 0.556 & \textbf{0.576} & 0.541 & \textbf{0.590} & 0.553 & 0.492 & \textbf{0.489} \\
GD & \textbf{0.620} & 0.571 & \textbf{0.542} & 0.445 & \textbf{0.634} & 0.594 & \textbf{0.621} & 0.585 \\
KL & \textbf{0.606} & 0.558 & \textbf{0.403} & 0.331 & \textbf{0.593} & 0.560 & \textbf{0.499} & 0.496 \\
PO & \textbf{0.645} & 0.639 & \textbf{0.628} & 0.605 & \textbf{0.590} & 0.553 & \textbf{0.453} & 0.451 \\
NPO & \textbf{0.619} & 0.591 & \textbf{0.536} & 0.508 & \textbf{0.583} & 0.575 & \textbf{0.522} & 0.511 \\
RMU & \textbf{0.643} & 0.630 & 0.578 & \textbf{0.578} & \textbf{0.321} & 0.304 & \textbf{0.212} & 0.119 \\
\midrule
\multicolumn{9}{c}{\textit{$ES_{\text{unlearn}}$ (lower is better)}} \\
\midrule
Original & 0.597 & 0.597 & 0.630 & 0.630 & 0.658 & 0.658 & 0.661 & 0.661 \\
GA & \textbf{0.344} & 0.518 & \textbf{0.366} & 0.563 & \textbf{0.426} & 0.586 & \textbf{0.398} & 0.512 \\
GD & \textbf{0.348} & 0.523 & \textbf{0.291} & 0.497 & \textbf{0.475} & 0.611 & \textbf{0.470} & 0.596 \\
KL & \textbf{0.353} & 0.519 & \textbf{0.230} & 0.363 & \textbf{0.436} & 0.603 & \textbf{0.406} & 0.522 \\
PO & \textbf{0.511} & 0.577 & \textbf{0.466} & 0.615 & \textbf{0.426} & 0.586 & \textbf{0.372} & 0.480 \\
NPO & \textbf{0.360} & 0.554 & \textbf{0.290} & 0.529 & \textbf{0.438} & 0.617 & \textbf{0.428} & 0.564 \\
RMU & \textbf{0.560} & 0.586 & \textbf{0.551} & {0.556} & \textbf{0.291} & 0.303 & \textbf{0.110} & 0.129 \\
\bottomrule
\end{tabular}
\vspace{-5pt}
\end{table}

\subsection{Language Models: Local Grokking Analysis}\label{sec:language_design}
We evaluate on the TOFU dataset of synthetic author profiles, fine-tuning Phi-1.5 on the full training set. Unlike vision models where we observe clear global grokking transitions, Phi-1.5 cannot achieve model-wide grokking on TOFU. However, we identify a novel phenomenon: local grokking regions—subsets of examples that exhibit grokking-like generalization behavior within the same model, creating heterogeneous learning states across the dataset.

\textbf{Local Grokking Identification:} We train for 100 epochs to ensure sufficient learning dynamics and retrospectively analyze individual examples to identify their grokking status. For each example, we compare its loss at an early candidate checkpoint $\theta_{\text{candidate}}$ (20 epochs) to its final loss at convergence. Examples showing minimal loss reduction (typically <0.01 loss decrease) were already well-generalized at the candidate checkpoint and represent locally grokked regions—they achieved effective generalization early in training, analogous to the post-grokking state in vision models. Conversely, examples with substantial loss improvements (>0.5 loss decrease) represent locally ungrokked regions that remained poorly learned at the candidate checkpoint, similar to pre-grokking states.\looseness-1

This local grokking phenomenon creates a unique experimental opportunity: within a single trained model, we can identify subsets of data that exist in fundamentally different representational states. This allows us to test our core hypothesis—that grokking-like representational quality enhances unlearning—at the granular level of individual examples rather than entire models.

\textbf{Forget Set Construction:} Rather than using TOFU's pre-designated forget sets, we construct custom forget sets of 50-200 question-answer pairs based on our local grokking analysis. This design enables the most direct test of our hypothesis: we can compare unlearning effectiveness between locally grokked examples (well-generalized representations) versus locally ungrokked examples (poorly organized representations) within the same model, controlling for all other factors including architecture, training procedure, and overall model capacity.

The graduated forget set sizes (50-200 examples) allow us to assess scalability, while the controlled comparison within a single model eliminates confounding factors that might arise from comparing different model checkpoints. This approach tests whether the representational advantages we observe at the model level (vision experiments) also manifest at the example level within language models.

\textbf{Evaluation:} We focus on gradient-based unlearning methods (GA and GD) due to computational constraints with transformer models and language model specific approaches (KL, PO, NPO, RMU). We measure Extraction Strength (ES) scores following established language model unlearning protocols, where $ES_{\text{retain}}$ (higher is better) indicates successful retention of non-target information, and $ES_{\text{unlearn}}$ (lower is better) indicates effective forgetting of target data. The ES metric specifically measures resistance to extraction attacks, providing a robust assessment of whether information has been truly forgotten rather than merely suppressed.

All experiments report mean ± standard deviation over 3 independent runs with different random selections of grokked/ungrokked examples to ensure our findings are not dependent on specific example choices. This methodology allows us to test whether grokking's unlearning benefits, clearly demonstrated at the model level in vision tasks, also manifest at the representational level within language models.

\textbf{Results:} Table~\ref{tab:llm_unlearning} presents results comparing unlearning effectiveness between locally grokked and ungrokked examples within the same Phi-1.5 model, revealing consistent and substantial benefits for examples that achieved early generalization.

\emph{Superior Forgetting with Preserved Retention.}
Locally grokked examples consistently demonstrate superior unlearning performance across all tested algorithms and forget set sizes. For unlearning effectiveness ($ES_{\text{unlearn}}$, lower is better), grokked examples show substantial improvements: GA achieves 0.344 vs. 0.518 for ungrokked examples at 50 samples, representing a 34\% improvement in forgetting effectiveness. This advantage scales remarkably well—at 200 samples, GA maintains strong performance (0.398 vs. 0.512), indicating that locally grokked representations remain amenable to selective forgetting even under challenging conditions. Simultaneously, grokked examples generally maintain comparable or superior retention performance ($ES_{\text{retain}}$, higher is better), with most algorithms showing either matched or improved retention scores, demonstrating that enhanced forgetting does not come at the cost of useful knowledge preservation.

\emph{Algorithm-Agnostic Benefits Across Method Families.}
The advantages of locally grokked examples prove robust across diverse unlearning paradigms, spanning gradient-based methods (GA, GD), divergence minimization (KL), preference optimization approaches (PO, NPO), and representation manipulation (RMU). Gradient-based methods show the most consistent improvements, with GD demonstrating particularly strong performance (e.g., 0.291 vs. 0.497 $ES_{\text{unlearn}}$ at 100 samples). Preference-based methods (PO, NPO) also benefit substantially from grokked representations, while RMU shows more variable results but still generally favors grokked examples. This cross-method consistency suggests that the representational advantages of local grokking are fundamental rather than algorithm-specific, paralleling our findings in vision models.

\emph{Scalability and Consistency Patterns.}
The benefits of locally grokked examples remain consistent across forget set sizes from 50 to 200 examples, though with interesting scaling patterns. Smaller forget sets (50-100 examples) show the most dramatic improvements, with some algorithms achieving 40-50\% better forgetting effectiveness for grokked examples. At larger scales (150-200 examples), the absolute advantages remain substantial but proportionally smaller, suggesting that local grokking provides the greatest benefits for moderate-scale unlearning tasks—precisely the scenario most relevant for practical privacy applications. Notably, the consistency of these improvements across scales indicates that locally grokked representations maintain their structural advantages even when substantial portions of the model's knowledge must be selectively removed.

These results establish that grokking's unlearning benefits manifest not only at the global model level (as demonstrated in vision experiments) but also at the granular level of individual examples within language models. This finding suggests that the representational quality improvements associated with grokking—better organization, modularity, and disentanglement—can be identified and leveraged even within models that do not achieve global grokking transitions.

\section{Mechanism Analysis of Unlearning for Grokked Models}

\subsection{Gradient Analysis}\label{sec:gradient_analysis}

To understand the mechanistic differences between grokked and pre-grokked models, we analyze gradient patterns induced by forget and retain examples. For each model, we compute gradients with respect to model parameters for both sets and calculate their cosine similarity. This reveals how entangled the optimization signals are between data that should be forgotten versus preserved.

High cosine similarity indicates that forget and retain examples induce similar parameter updates, making selective unlearning difficult due to shared optimization directions. Low similarity suggests orthogonal gradient spaces, enabling precise selective forgetting with minimal collateral damage.

Table~\ref{tab:gradient_correlation} presents results for CNN and ResNet architectures on CIFAR-10. Pre-grokked models exhibit extremely high gradient correlations (0.990 for CNN, 0.999 for ResNet), meaning forget and retain examples induce nearly identical optimization signals. This explains why unlearning in pre-grokked models causes significant collateral damage.

Grokked models show substantially lower correlations (0.521 for CNN, 0.426 for ResNet), indicating that grokking creates more orthogonal gradient spaces. This orthogonality provides a mechanistic explanation for grokking's unlearning advantages: distinct optimization directions enable algorithms to target forget examples precisely while leaving retain examples unaffected.

\begin{wraptable}{r}{0.32\linewidth}
\centering
\small
\caption{Gradient correlation analysis between forget and retain examples. Values represent cosine similarity between gradient vectors. Lower correlations indicate more orthogonal gradient spaces, enabling more selective unlearning.}
\label{tab:gradient_correlation}
\begin{tabular}{llc}
\toprule
Model & Ckpt & Grad Corr. \\
\midrule
\multirow{2}{*}{CNN} & $\theta_{pre}$ & 0.990 \\
                      & $\theta_{grok}$ & \textbf{0.521} \\
\midrule
\multirow{2}{*}{ResNet} & $\theta_{pre}$ & 0.999 \\
                        & $\theta_{grok}$ & \textbf{0.426} \\
\bottomrule
\end{tabular}
\end{wraptable}

This analysis reveals that grokking creates distinct optimization pathways for different data types, producing disentangled representations at both feature and optimization levels. The consistency across architectures indicates that gradient orthogonality is a fundamental characteristic of grokked representations, opening avenues for unlearning methods that explicitly leverage gradient orthogonality. \looseness-1

Our theoretical analysis (detailed in Appendix \ref{sec:grad_theory}) formalizes these empirical observations. We model neural networks as collections of functional modules where each data point activates modules with probability $p$. Under this framework, we prove that the expected gradient correlation between any two data points is $\mathbb{E}[\text{corr}(\nabla_\theta \ell(x; \theta), \nabla_\theta \ell(x'; \theta))] = p\rho$, where $\rho$ represents within-module gradient correlation. Pre-grokking models behave as monolithic networks ($m \approx 1$, thus $p \approx 1$), yielding near-perfect gradient alignment ($\text{corr} \approx \rho \approx 1$). In contrast, grokked models develop numerous specialized modules ($m \gg 1$, thus $p \ll 1$), resulting in significantly lower gradient correlation. Our measured correlation values (0.426-0.521) suggest that grokked models activate approximately half the available modules per data point ($p \approx 0.5$), creating the orthogonal gradient spaces that enable selective forgetting with minimal interference.

\subsection{Local Complexity Analysis}

\begin{table}[h]
\caption{Location Complexity analysis before and after unlearning on CIFAR-10. $LC_{r}$, $LC_{t}$, $LC_{f}$ represent complexity on retain, test, and forget sets (lower is better). Grokked models show consistently lower complexity, indicating more stable representations that facilitate effective unlearning.}
\label{tab:location_complexity}
\centering
\small
\begin{tabular}{cll|ccc}
\toprule
Arch & Stage & Ckpt & $LC_r$ & $LC_t$ & $LC_f$ \\
\midrule
\multirow{4}{*}{\rotatebox{90}{ResNet}} 
& \multirow{2}{*}{Before} & $\theta_{\text{pre}}$ & 27.98 & 29.35 & 28.83 \\
& & $\theta_{\text{grok}}$ & \textbf{7.37} & \textbf{6.93} & \textbf{7.23} \\
\cmidrule{2-6}
& \multirow{2}{*}{After} & $\theta_{\text{pre}}$ & 35.41 & 34.82 & 32.24 \\
& & $\theta_{\text{grok}}$ & \textbf{15.16} & \textbf{14.91} & \textbf{13.91} \\
\midrule
\multirow{4}{*}{\rotatebox{90}{CNN}} 
& \multirow{2}{*}{Before} & $\theta_{\text{pre}}$ & 34.53 & 38.34 & 36.65 \\
& & $\theta_{\text{grok}}$ & \textbf{9.87} & \textbf{9.80} & \textbf{9.11} \\
\cmidrule{2-6}
& \multirow{2}{*}{After} & $\theta_{\text{pre}}$ & 53.40 & 53.18 & 49.22 \\
& & $\theta_{\text{grok}}$ & \textbf{18.86} & \textbf{18.76} & \textbf{17.12} \\
\bottomrule
\end{tabular}
\end{table}

To understand why grokked models provide superior unlearning capabilities, we analyze their representational structure using the local complexity (LC) measure introduced by \citet{humayun2024deep}. This method quantifies the density of linear regions in a neural network's input space partition around specific data points by constructing cross-polytope neighborhoods and measuring how many neuron hyperplanes intersect each local region. Lower LC values indicate smoother representations with larger linear regions, while higher values suggest complex, densely partitioned patterns.

Our analysis reveals the mechanistic basis for grokking's unlearning advantages. Table~\ref{tab:location_complexity} demonstrates that grokked models ($\theta_{\text{grok}}$) possess inherently simpler representations than pre-grokked models ($\theta_{\text{pre}}$) even before unlearning—ResNet grokked models show dramatically lower complexity (7.37 vs 27.98 for retain set). This advantage persists throughout unlearning: while both model types experience increased complexity after the forgetting process, grokked models maintain substantially lower values (15.16 vs 35.41 for ResNet retain set). This consistent pattern across architectures and data types indicates that grokking creates flatter, more stable loss landscapes that enable controlled modifications during selective forgetting, explaining the superior ability to remove specific information while preserving broader capabilities with minimal collateral damage.

\subsection{Representation Analysis}

\begin{wraptable}{r}{0.35\linewidth}
\centering
\small
\caption{Centered Kernel Alignment (CKA) between $D_{\text{forget}}$ and $D_{\text{retain}}$ representations.}
\begin{tabular}{c|c c}
\toprule
& Grokked & Pre-grokked \\
\hline
CKA    & 0.129  & 0.459\\
\bottomrule
\end{tabular}
\label{tab:cka}
\end{wraptable}

To quantify the degree of representational disentanglement, we employ Centered Kernel Alignment (CKA) analysis \citep{kornblith2019similarity}. We compute CKA between the final-layer representations of $D_{\text{forget}}$ and $D_{\text{retain}}$, where lower values indicate greater separation between how the model represents these data subsets.\looseness-1

As shown in Table~\ref{tab:cka}, pre-grokked models exhibit a high CKA score (0.459), revealing substantial entanglement in how forget and retain data are encoded. In contrast, grokked models demonstrate a dramatically reduced CKA (0.129)—a 72\% decrease that quantifies the shift toward modular, disentangled representations. This structural reorganization explains why pre-grokked models struggle with selective forgetting: when representations are entangled (high CKA), modifications targeting forget data inevitably affect retain data. Conversely, grokked models develop distinct representational subspaces for different data types, enabling precise, surgical unlearning with minimal interference. This representational disentanglement directly supports our gradient correlation findings, providing a mechanistic explanation for grokked models' superior unlearning capabilities.

\section{Conclusion}
This work establishes the first systematic connection between grokking and machine unlearning, revealing that grokked models possess fundamentally superior unlearning capabilities. Through comprehensive experiments across vision (ResNet/CNN on CIFAR) and language (transformers on TOFU) domains, we demonstrate that grokked models consistently achieve more effective data removal while better preserving performance on retained data and maintaining enhanced robustness.

Our key insight is that grokking creates more than delayed generalization—it fundamentally restructures internal representations into simpler, more disentangled forms that facilitate surgical data removal. Analysis using local complexity measures and gradient correlations reveals that grokked models operate in flatter, more stable regions of the loss landscape, enabling controlled modifications during selective forgetting with minimal collateral damage.

These findings have immediate practical implications for privacy-preserving machine learning. Rather than developing new unlearning algorithms, practitioners can leverage grokking-enhanced training to create models inherently better suited for data removal. This paradigm shift—from algorithmic innovation to representational optimization—offers a more fundamental approach to addressing data privacy and regulatory compliance challenges. Future work should explore theoretical foundations of this connection and investigate training dynamics that can intentionally promote grokking-like states optimized for unlearning.


\section*{Ethics Statement}
This research exclusively uses publicly available datasets (CIFAR-10, CIFAR-100, TOFU) and pre-trained models (Phi-1.5) in accordance with their respective licenses and terms of use. The TOFU dataset consists of synthetic author profiles specifically designed for unlearning research, containing no real personally identifiable information. No sensitive data, proprietary datasets, or private information were collected, generated, or analyzed during this study. All experimental procedures follow standard academic research practices for machine learning and do not raise ethical concerns regarding data privacy, consent, or misuse. Our research contributes to privacy-preserving machine learning by improving unlearning techniques, which supports data protection rights and regulatory compliance frameworks such as GDPR.

\section*{Use of LLMs}
Large language models were used in two distinct capacities during this research: (1) as experimental subjects for our language model unlearning experiments (specifically Phi-1.5 fine-tuned on TOFU), and (2) as writing assistance tools for improving the clarity, grammar, and presentation of this manuscript. All core technical contributions, experimental design, data analysis, and scientific conclusions were developed and conducted entirely by the authors. The use of LLMs for writing assistance was strictly limited to grammar checking, style improvements, sentence restructuring, and clarity enhancements, without altering the technical content, experimental results, or research conclusions. No LLM-generated content was used for technical claims, experimental procedures, or data interpretation.

\section*{Reproducibility Statement}
To ensure full reproducibility of our results, we provide comprehensive implementation details throughout the paper and appendices, including specific hyperparameter settings, training procedures, checkpoint selection criteria, and evaluation protocols. All experiments use publicly available datasets and standard model architectures with clearly documented configurations. We report statistical measures (mean ± standard deviation) over multiple independent runs with different random seeds to ensure reliability. Our grokking identification procedures are precisely defined with quantitative thresholds, and our unlearning evaluation follows established benchmarks. Upon publication, we will release code, detailed experimental configurations, and processed datasets to facilitate replication and extension of this work by the research community.

\section*{Limitations}
This work has several important limitations that should be acknowledged. First, our vision experiments are conducted on relatively small-scale datasets (CIFAR-10/100) with simple architectures, and scalability to larger, more complex datasets and modern architectures remains to be demonstrated. Second, the language model experiments focus on a single model (Phi-1.5) and synthetic dataset (TOFU), which may not represent the full diversity of large language model scenarios or real-world text data. Third, our identification of "local grokking" in language models relies on loss-based heuristics that may not capture all aspects of representational quality or generalization. Fourth, while we demonstrate consistent improvements across multiple unlearning algorithms, the absolute performance levels indicate that machine unlearning remains challenging and may not yet meet all practical deployment requirements. Finally, our theoretical understanding of why grokking enables better unlearning, while supported by empirical evidence, requires further investigation to establish causal mechanisms.

\section*{Broader Impact}
This research addresses the critical challenge of machine unlearning, which has significant positive implications for data privacy, regulatory compliance, and responsible AI deployment. Our findings that grokked models enable more effective and efficient selective forgetting could facilitate practical implementation of "right to be forgotten" regulations, help organizations manage evolving data privacy requirements, and reduce the computational costs associated with privacy-compliant model updates. The efficiency gains we demonstrate (60-70\% reduction in unlearning steps) could make privacy-preserving machine learning more accessible to organizations with limited computational resources.

However, we acknowledge potential negative implications that warrant careful consideration. Improved unlearning capabilities could potentially be misused to selectively remove evidence of model biases, discriminatory behaviors, or other problematic patterns that should be addressed rather than hidden. Additionally, the ability to efficiently modify trained models might enable malicious actors to remove safety constraints or ethical guidelines embedded during training. We emphasize that our techniques should be deployed within appropriate ethical frameworks, regulatory oversight, and institutional review processes.

We encourage future work to develop robust verification methods for ensuring complete and appropriate unlearning, establish best practices for responsible deployment of these techniques, and create safeguards against potential misuse. The research community should continue to balance the legitimate privacy benefits of machine unlearning with the need to maintain model transparency, accountability, and safety standards.

\bibliography{iclr2026_conference}

\begin{thebibliography}{41}
\providecommand{\natexlab}[1]{#1}
\providecommand{\url}[1]{\texttt{#1}}
\expandafter\ifx\csname urlstyle\endcsname\relax
  \providecommand{\doi}[1]{doi: #1}\else
  \providecommand{\doi}{doi: \begingroup \urlstyle{rm}\Url}\fi

\bibitem[Balestriero \& richard baraniuk(2018)Balestriero and richard baraniuk]{spline2018pmlr}
Randall Balestriero and richard baraniuk.
\newblock A spline theory of deep learning.
\newblock In Jennifer Dy and Andreas Krause (eds.), \emph{Proceedings of the 35th International Conference on Machine Learning}, volume~80 of \emph{Proceedings of Machine Learning Research}, pp.\  374--383. PMLR, 10--15 Jul 2018.

\bibitem[Bengio et~al.(2013)Bengio, Courville, and Vincent]{bengio2013representation}
Yoshua Bengio, Aaron Courville, and Pascal Vincent.
\newblock Representation learning: A review and new perspectives.
\newblock \emph{IEEE transactions on pattern analysis and machine intelligence}, 35\penalty0 (8):\penalty0 1798--1828, 2013.

\bibitem[Bourtoule et~al.(2021)Bourtoule, Chandrasekaran, Choquette-Choo, Jia, Travers, Zhang, Lie, and Papernot]{bourtoule2021machine}
Lucas Bourtoule, Varun Chandrasekaran, Christopher~A. Choquette-Choo, Hengrui Jia, Adelin Travers, Baiwu Zhang, David Lie, and Nicolas Papernot.
\newblock Machine unlearning.
\newblock In \emph{IEEE Symposium on Security and Privacy (SP)}, pp.\  141--159. IEEE, 2021.

\bibitem[Carlini et~al.(2021)Carlini, Tramer, Wallace, Jagielski, Herbert-Voss, Lee, Roberts, Brown, Song, Erlingsson, et~al.]{carlini2021extracting}
Nicholas Carlini, Florian Tramer, Eric Wallace, Matthew Jagielski, Ariel Herbert-Voss, Katherine Lee, Adam Roberts, Tom Brown, Dawn Song, Ulfar Erlingsson, et~al.
\newblock Extracting training data from large language models.
\newblock In \emph{30th USENIX security symposium (USENIX Security 21)}, pp.\  2633--2650, 2021.

\bibitem[Chen et~al.(2018)Chen, Li, Grosse, and Duvenaud]{chen2018isolating}
Ricky~TQ Chen, Xuechen Li, Roger~B Grosse, and David~K Duvenaud.
\newblock Isolating sources of disentanglement in variational autoencoders.
\newblock \emph{Advances in neural information processing systems}, 31, 2018.

\bibitem[Choi \& Na(2023)Choi and Na]{choi2023towards}
Dasol Choi and Dongbin Na.
\newblock Towards machine unlearning benchmarks: Forgetting the personal identities in facial recognition systems.
\newblock \emph{arXiv preprint arXiv:2311.02240}, 2023.

\bibitem[Chughtai et~al.(2023)Chughtai, Chan, and Nanda]{chughtai2023toy}
Bilal Chughtai, Lawrence Chan, and Neel Nanda.
\newblock A toy model of universality: Reverse engineering how networks learn group operations, 2023.
\newblock \emph{arXiv preprint arXiv:2302.03025}, 2023.

\bibitem[Chundawat et~al.(2023)Chundawat, Tarun, Mandal, and Kankanhalli]{chundawat2023can}
Vikram~S Chundawat, Ayush~K Tarun, Murari Mandal, and Mohan Kankanhalli.
\newblock Can bad teaching induce forgetting? unlearning in deep networks using an incompetent teacher.
\newblock In \emph{Proceedings of the AAAI Conference on Artificial Intelligence}, volume~37, pp.\  7210--7217, 2023.

\bibitem[Davies et~al.(2023)Davies, Langosco, and Krueger]{davies2023unifying}
Xander Davies, Lauro Langosco, and David Krueger.
\newblock Unifying grokking and double descent.
\newblock \emph{arXiv preprint arXiv:2303.06173}, 2023.

\bibitem[Foret et~al.(2020)Foret, Kleiner, Mobahi, and Neyshabur]{foret2020sharpness}
Pierre Foret, Ariel Kleiner, Hossein Mobahi, and Behnam Neyshabur.
\newblock Sharpness-aware minimization for efficiently improving generalization.
\newblock \emph{arXiv preprint arXiv:2010.01412}, 2020.

\bibitem[Furuta et~al.(2024)Furuta, Minegishi, Iwasawa, and Matsuo]{furuta2024towards}
Hiroki Furuta, Gouki Minegishi, Yusuke Iwasawa, and Yutaka Matsuo.
\newblock Towards empirical interpretation of internal circuits and properties in grokked transformers on modular polynomials.
\newblock \emph{arXiv preprint arXiv:2402.16726}, 2024.

\bibitem[Golatkar et~al.(2020)Golatkar, Achille, and Soatto]{golatkar2020eternal}
Ajil~Jalal Golatkar, Alessandro Achille, and Stefano Soatto.
\newblock Eternal sunshine of the spotless net: Selective forgetting in deep networks.
\newblock In \emph{Proceedings of the IEEE/CVF Conference on Computer Vision and Pattern Recognition (CVPR)}, pp.\  9304--9312, 2020.

\bibitem[Goodfellow et~al.(2014)Goodfellow, Shlens, and Szegedy]{goodfellow2014explaining}
Ian~J Goodfellow, Jonathon Shlens, and Christian Szegedy.
\newblock Explaining and harnessing adversarial examples.
\newblock \emph{arXiv preprint arXiv:1412.6572}, 2014.

\bibitem[Gromov(2023)]{gromov2023grokking}
Andrey Gromov.
\newblock Grokking modular arithmetic.
\newblock \emph{arXiv preprint arXiv:2301.02679}, 2023.

\bibitem[Hayes et~al.(2025)Hayes, Shumailov, Triantafillou, Khalifa, and Papernot]{hayes2025inexact}
Jamie Hayes, Ilia Shumailov, Eleni Triantafillou, Amr Khalifa, and Nicolas Papernot.
\newblock Inexact unlearning needs more careful evaluations to avoid a false sense of privacy.
\newblock In \emph{2025 IEEE Conference on Secure and Trustworthy Machine Learning (SaTML)}, pp.\  497--519. IEEE, 2025.

\bibitem[Hochreiter \& Schmidhuber(1997)Hochreiter and Schmidhuber]{hochreiter1997flat}
Sepp Hochreiter and J{\"u}rgen Schmidhuber.
\newblock Flat minima.
\newblock \emph{Neural computation}, 9\penalty0 (1):\penalty0 1--42, 1997.

\bibitem[Humayun et~al.(2024)Humayun, Balestriero, and Baraniuk]{humayun2024deep}
Ahmed~Imtiaz Humayun, Randall Balestriero, and Richard Baraniuk.
\newblock Deep networks always grok and here is why.
\newblock \emph{arXiv preprint arXiv:2402.15555}, 2024.

\bibitem[Izzo et~al.(2021)Izzo, Koyejo, and He]{izzo2021approximate}
James Izzo, Oluwasanmi~O Koyejo, and He~He.
\newblock Approximate unlearning in deep learning via influence functions.
\newblock In \emph{Advances in Neural Information Processing Systems (NeurIPS)}, volume~34, pp.\  4015--4026, 2021.

\bibitem[Koh \& Liang(2017)Koh and Liang]{koh2017understanding}
Pang~Wei Koh and Percy Liang.
\newblock Understanding black-box predictions via influence functions.
\newblock In \emph{International conference on machine learning}, pp.\  1885--1894. PMLR, 2017.

\bibitem[Kornblith et~al.(2019)Kornblith, Norouzi, Lee, and Hinton]{kornblith2019similarity}
Simon Kornblith, Mohammad Norouzi, Honglak Lee, and Geoffrey Hinton.
\newblock Similarity of neural network representations revisited.
\newblock In \emph{International conference on machine learning}, pp.\  3519--3529. PMlR, 2019.

\bibitem[Krizhevsky et~al.(2009)Krizhevsky, Hinton, et~al.]{krizhevsky2009learning}
Alex Krizhevsky, Geoffrey Hinton, et~al.
\newblock Learning multiple layers of features from tiny images.(2009), 2009.

\bibitem[Kurmanji et~al.(2023)Kurmanji, Triantafillou, Hayes, and Triantafillou]{kurmanji2023towards}
Meghdad Kurmanji, Peter Triantafillou, Jamie Hayes, and Eleni Triantafillou.
\newblock Towards unbounded machine unlearning.
\newblock \emph{Advances in neural information processing systems}, 36:\penalty0 1957--1987, 2023.

\bibitem[Li et~al.(2024)Li, Pan, Gopal, Yue, Berrios, Gatti, Li, Dombrowski, Goel, Phan, et~al.]{li2024wmdp}
Nathaniel Li, Alexander Pan, Anjali Gopal, Summer Yue, Daniel Berrios, Alice Gatti, Justin~D Li, Ann-Kathrin Dombrowski, Shashwat Goel, Long Phan, et~al.
\newblock The wmdp benchmark: Measuring and reducing malicious use with unlearning.
\newblock \emph{arXiv preprint arXiv:2403.03218}, 2024.

\bibitem[Li et~al.(2023)Li, Bubeck, Eldan, Del~Giorno, Gunasekar, and Lee]{phi_15}
Yuanzhi Li, S{\'e}bastien Bubeck, Ronen Eldan, Allie Del~Giorno, Suriya Gunasekar, and Yin~Tat Lee.
\newblock Textbooks are all you need ii: phi-1.5 technical report.
\newblock \emph{arXiv preprint arXiv:2309.05463}, 2023.

\bibitem[Liu et~al.(2022)Liu, Michaud, and Tegmark]{liu2022omnigrok}
Ziming Liu, Eric~J Michaud, and Max Tegmark.
\newblock Omnigrok: Grokking beyond algorithmic data.
\newblock \emph{arXiv preprint arXiv:2210.01117}, 2022.

\bibitem[Lyu et~al.(2023)Lyu, Jin, Li, Du, Lee, and Hu]{lyu2023dichotomy}
Kaifeng Lyu, Jikai Jin, Zhiyuan Li, Simon~S Du, Jason~D Lee, and Wei Hu.
\newblock Dichotomy of early and late phase implicit biases can provably induce grokking.
\newblock \emph{arXiv preprint arXiv:2311.18817}, 2023.

\bibitem[Madry et~al.(2017)Madry, Makelov, Schmidt, Tsipras, and Vladu]{madry2017towards}
Aleksander Madry, Aleksandar Makelov, Ludwig Schmidt, Dimitris Tsipras, and Adrian Vladu.
\newblock Towards deep learning models resistant to adversarial attacks.
\newblock \emph{arXiv preprint arXiv:1706.06083}, 2017.

\bibitem[Maini et~al.(2024{\natexlab{a}})Maini, Feng, Schwarzschild, Lipton, and Kolter]{maini2024tofu}
Pratyush Maini, Zhili Feng, Avi Schwarzschild, Zachary~C Lipton, and J~Zico Kolter.
\newblock Tofu: A task of fictitious unlearning for llms.
\newblock \emph{arXiv preprint arXiv:2401.06121}, 2024{\natexlab{a}}.

\bibitem[Maini et~al.(2024{\natexlab{b}})Maini, Feng, Schwarzschild, Lipton, and Kolter]{tofu2024}
Pratyush Maini, Zhili Feng, Avi Schwarzschild, Zachary~C. Lipton, and J.~Zico Kolter.
\newblock Tofu: A task of fictitious unlearning for llms, 2024{\natexlab{b}}.

\bibitem[Merrill et~al.(2023)Merrill, Tsilivis, and Shukla]{merrill2023tale}
William Merrill, Nikolaos Tsilivis, and Aman Shukla.
\newblock A tale of two circuits: Grokking as competition of sparse and dense subnetworks.
\newblock \emph{arXiv preprint arXiv:2303.11873}, 2023.

\bibitem[Nanda et~al.(2023)Nanda, Chan, Lieberum, Smith, and Steinhardt]{nanda2023progress}
Neel Nanda, Lawrence Chan, Tom Lieberum, Jess Smith, and Jacob Steinhardt.
\newblock Progress measures for grokking via mechanistic interpretability, 2023.
\newblock \emph{URL https://arxiv. org/abs/2301.05217}, 2023.

\bibitem[Notsawo~Jr et~al.(2023)Notsawo~Jr, Zhou, Pezeshki, Rish, Dumas, et~al.]{notsawo2023predicting}
Pascal Notsawo~Jr, Hattie Zhou, Mohammad Pezeshki, Irina Rish, Guillaume Dumas, et~al.
\newblock Predicting grokking long before it happens: A look into the loss landscape of models which grok.
\newblock \emph{arXiv preprint arXiv:2306.13253}, 2023.

\bibitem[Poppi et~al.(2024)Poppi, Sarto, Cornia, Baraldi, and Cucchiara]{poppi2024multiclass}
Samuele Poppi, Sara Sarto, Marcella Cornia, Lorenzo Baraldi, and Rita Cucchiara.
\newblock Multiclass unlearning for image classification via weight filtering.
\newblock \emph{IEEE Intelligent Systems}, 39\penalty0 (6):\penalty0 40--47, 2024.

\bibitem[Power et~al.(2022)Power, Burda, Edwards, Babuschkin, and Misra]{power2022grokking}
Alethea Power, Yuri Burda, Harri Edwards, Igor Babuschkin, and Vedant Misra.
\newblock Grokking: Generalization beyond overfitting on small algorithmic datasets.
\newblock \emph{arXiv preprint arXiv:2201.02177}, 2022.

\bibitem[Trippa et~al.(2024)Trippa, Campagnano, Bucarelli, Tolomei, and Silvestri]{trippa2024tau}
Daniel Trippa, Cesare Campagnano, Maria~Sofia Bucarelli, Gabriele Tolomei, and Fabrizio Silvestri.
\newblock $\nabla \tau$: Gradient-based and task-agnostic machine unlearning.
\newblock \emph{CoRR}, 2024.

\bibitem[Varma et~al.(2023)Varma, Shah, Kenton, Kram{\'a}r, and Kumar]{varma2023explaining}
Vikrant Varma, Rohin Shah, Zachary Kenton, J{\'a}nos Kram{\'a}r, and Ramana Kumar.
\newblock Explaining grokking through circuit efficiency.
\newblock \emph{arXiv preprint arXiv:2309.02390}, 2023.

\bibitem[Wang et~al.(2025)Wang, Han, Yang, Zhu, Liu, and Sugiyama]{wang2025towards}
Qizhou Wang, Bo~Han, Puning Yang, Jianing Zhu, Tongliang Liu, and Masashi Sugiyama.
\newblock Towards effective evaluations and comparison for llm unlearning methods.
\newblock In \emph{International Conference on Learning Representations}, 2025.

\bibitem[Xu et~al.(2025)Xu, Yue, Liu, Ye, Hu, and Du]{xu2025unlearning}
Xiaoyu Xu, Xiang Yue, Yang Liu, Qingqing Ye, Haibo Hu, and Minxin Du.
\newblock Unlearning isn't deletion: Investigating reversibility of machine unlearning in llms.
\newblock \emph{arXiv preprint arXiv:2505.16831}, 2025.

\bibitem[Zhang et~al.(2024)Zhang, Lin, Bai, and Mei]{zhang2024negative}
Ruiqi Zhang, Licong Lin, Yu~Bai, and Song Mei.
\newblock Negative preference optimization: From catastrophic collapse to effective unlearning.
\newblock \emph{arXiv preprint arXiv:2404.05868}, 2024.

\bibitem[Zhao et~al.(2024)Zhao, Kurmanji, B{\u{a}}rbulescu, Triantafillou, and Triantafillou]{zhao2024makes}
Kairan Zhao, Meghdad Kurmanji, George-Octavian B{\u{a}}rbulescu, Eleni Triantafillou, and Peter Triantafillou.
\newblock What makes unlearning hard and what to do about it.
\newblock \emph{Advances in Neural Information Processing Systems}, 37:\penalty0 12293--12333, 2024.

\bibitem[Zhu et~al.(2024)Zhu, Fu, Zhou, and Lin]{zhu2024critical}
Xuekai Zhu, Yao Fu, Bowen Zhou, and Zhouhan Lin.
\newblock Critical data size of language models from a grokking perspective.
\newblock \emph{arXiv preprint arXiv:2401.10463}, 2024.

\end{thebibliography}
\bibliographystyle{iclr2026_conference}

\newpage

\appendix

\section{Additional Results}

\subsection{Membership Inference Attack Resistance}\label{app:mia}

Membership Inference Attacks (MIA) represent a critical evaluation metric for unlearning effectiveness, where an adversary attempts to determine whether a specific data point was included in a model's training dataset. For successful unlearning, the model should make forget data indistinguishable from data that was never used for training, resulting in MIA accuracy approaching random guessing (50\% or 0.5). Lower MIA scores on unlearned data indicate more effective forgetting and better privacy protection.

Table~\ref{tab:mia} presents MIA resistance results for ResNet models on CIFAR-10 across three unlearning algorithms. The results demonstrate that grokked models ($\theta_{\text{grok}}$) consistently achieve better privacy protection compared to pre-grokked models ($\theta_{\text{pre}}$), with MIA scores closer to the ideal 0.5 threshold. This improvement is particularly pronounced for SCRUB, where grokked models show substantial MIA score reductions (0.842→0.677 at 50\% forget rate), indicating that the unlearned data has become significantly more difficult to identify through membership inference attacks.

\begin{table}[h]
\centering
\caption{Membership Inference Attack (MIA) resistance for ResNet unlearning on CIFAR-10. Lower MIA scores (closer to 0.5) indicate better privacy protection and more effective unlearning. Grokked models consistently demonstrate superior resistance to membership inference attacks across all algorithms and forget rates.}
\label{tab:mia}
\begin{tabular}{l|cc|cc|cc}
\toprule
\multirow{2}{*}{Forget Rate} & \multicolumn{2}{c|}{GA} & \multicolumn{2}{c|}{$\nabla\tau$} & \multicolumn{2}{c}{SCRUB} \\
\cmidrule{2-7}
& $\theta_{\text{pre}}$ & $\theta_{\text{grok}}$ & $\theta_{\text{pre}}$ & $\theta_{\text{grok}}$ & $\theta_{\text{pre}}$ & $\theta_{\text{grok}}$ \\
\midrule
15\% & 0.571 & \textbf{0.556} & 0.597 & \textbf{0.582} & 0.682 & \textbf{0.614} \\
50\% & 0.582 & \textbf{0.556} & 0.592 & \textbf{0.574} & 0.842 & \textbf{0.677} \\
\bottomrule
\end{tabular}
\end{table}

These MIA results provide additional evidence that grokking enhances not only unlearning performance but also privacy protection. The consistent improvements across different algorithms and forget rates suggest that the representational advantages of grokked models extend to resistance against privacy attacks, making them more suitable for deployment in privacy-sensitive applications where robust data removal is essential.

\subsection{Robustness Preservation After Unlearning}\label{subsec:robustness}

Grokked models are known to exhibit superior adversarial robustness compared to their pre-grokked counterparts~\citep{humayun2024deep}. However, it remains unclear whether this robustness advantage is preserved after unlearning procedures, which involve significant parameter modifications that could potentially compromise the model's defensive capabilities. We investigate whether the unlearning process maintains the inherent robustness benefits of grokked models or if selective forgetting operations degrade their adversarial resilience.

This question is particularly important for practical deployment, as machine unlearning is often required in security-sensitive applications where both privacy compliance and adversarial robustness are essential. If unlearning procedures destroy the robustness advantages of grokked models, it would significantly limit their practical utility despite superior unlearning performance.

We assess post-unlearning adversarial robustness using Projected Gradient Descent (PGD) attacks~\citep{madry2017towards} on the CIFAR-10 test set. Adversarial examples are generated using the Fast Gradient Sign Method (FGSM)~\citep{goodfellow2014explaining}, defined as $x_{adv} = x + \epsilon \cdot \text{sign}(\nabla_x L(x,y))$, where $x$ is the input, $y$ is the target label, and $\epsilon$ controls the perturbation magnitude. We evaluate robustness across multiple attack strengths ($\epsilon \in \{0.05, 0.10, 0.15, 0.20\}$) to assess stability under varying perturbation levels.

Table~\ref{tab:robust_unlearning} presents the adversarial robustness results for ResNet models after unlearning with gradient ascent (GA) and $\nabla\tau$ algorithms. The results demonstrate that grokked models not only preserve their robustness advantages after unlearning but actually maintain substantially higher adversarial resilience compared to unlearned pre-grokked models. For gradient ascent, grokked models achieve 0.181 accuracy under strong attacks ($\epsilon = 0.20$) compared to 0.112 for pre-grokked models—a 62\% improvement that persists even after selective forgetting operations.

Notably, grokked models exhibit weaker correlation between robustness degradation and attack strength, suggesting that their representational advantages create more stable defensive properties that resist both adversarial perturbations and unlearning-induced modifications. This dual resilience indicates that the superior representational organization of grokked models provides benefits that extend beyond unlearning effectiveness to encompass broader model stability and security.

\begin{table}[h]
\centering
\caption{Adversarial robustness preservation after unlearning on CIFAR-10 using ResNet. Values represent accuracy on adversarially perturbed test data generated using FGSM with varying perturbation magnitudes ($\epsilon$). Higher values indicate better robustness preservation. Grokked models maintain their robustness advantages even after unlearning procedures across all attack strengths and algorithms.}
\label{tab:robust_unlearning}
\begin{tabular}{l|cc|cc}
\toprule
\multirow{2}{*}{Attack Strength} & \multicolumn{2}{c|}{Gradient Ascent} & \multicolumn{2}{c}{$\nabla\tau$} \\
\cmidrule{2-5}
& $\theta_{\text{grok}}$ & $\theta_{\text{pre}}$ & $\theta_{\text{grok}}$ & $\theta_{\text{pre}}$ \\
\midrule
$\epsilon = 0.05$ & \textbf{0.201} & 0.143 & \textbf{0.042} & 0.037 \\
$\epsilon = 0.10$ & \textbf{0.190} & 0.125 & \textbf{0.022} & 0.019 \\
$\epsilon = 0.15$ & \textbf{0.184} & 0.117 & \textbf{0.019} & 0.014 \\
$\epsilon = 0.20$ & \textbf{0.181} & 0.112 & \textbf{0.018} & 0.010 \\
\bottomrule
\end{tabular}
\end{table}

These findings provide compelling evidence that grokking creates fundamentally robust representations that withstand both adversarial attacks and unlearning modifications. The preservation of robustness advantages after selective forgetting suggests that grokked models offer a unique combination of privacy compliance capabilities and security resilience, making them particularly valuable for deployment in applications where both data protection and adversarial robustness are critical requirements.

\subsection{Expanded Experiments Across Datasets and Architectures}
To strengthen the generalizability of our findings, we extend evaluations to SVHN and ImageNet-100 using ResNet architectures. These additions provide important diversity beyond our original CIFAR experiments, spanning different visual domains and scales.

\subsubsection{Unlearning Effectiveness Score: A Comprehensive Metric}
Evaluating unlearning performance requires balancing multiple competing objectives: minimizing accuracy on forgotten data (UA) while maximizing both test accuracy (TA) and retain accuracy (RA). To capture this trade-off in a single metric, we introduce the Unlearning Effectiveness Score (UES):

$$\mathrm{UES} = \frac{UA_{o}-UA_{u}}{(TA_{o}-TA_{u})(RA_{o}-RA_{u})}$$

where $UA_o$, $TA_o$, and $RA_o$ are the original values before unlearning, and $UA_u$, $TA_u$, and $RA_u$ are the values after unlearning. This metric rewards:
\begin{itemize}
    \item Effective forgetting (large reduction in UA)
    \item Minimal degradation of general performance (small reduction in TA)
    \item Preservation of knowledge on retained data (small reduction in RA)
\end{itemize}

Higher UES values indicate more effective unlearning—achieving greater forgetting with less collateral damage to model performance. This comprehensive metric allows us to compare different approaches even when they make different trade-offs between these competing objectives.

\subsubsection{Spline-Based Models as Controlled Comparison}
To isolate structural effects from raw accuracy, we introduce spline-based models as a controlled comparison. These models, grounded in Max-Affine Spline theory~\citep{spline2018pmlr}, provide a mathematically rigorous framework for understanding deep neural networks. A spline-based network partitions the input space into polyhedral regions $\mathcal{R}_i$, and within each region, the function is defined by a simple affine transformation: $f(\mathbf{x}) = \mathbf{W}_i \mathbf{x} + \mathbf{b}_i$ for $\mathbf{x} \in \mathcal{R}_i$.

This formulation offers several advantages for our analysis:
\begin{itemize}[leftmargin=*]
    \item Spline models achieve comparable accuracy to grokked models
    \item They exhibit local flatness due to their piecewise affine structure
    \item Unlike grokked models, they lack the representational reorganization that occurs during the grokking transition
\end{itemize}

By comparing grokked models to spline-based alternatives with similar accuracy and flatness properties, we can isolate the specific contribution of grokking's representational structure to unlearning performance.

\begin{table}[t]
\caption{Unlearning performance comparison between pre-grokked ($\theta_{\text{pre}}$), spline ($\theta_{\text{spline}}$) and grokked ($\theta_{\text{grok}}$) models on SVHN and ImageNet. 
TA: Test Accuracy, RA: Retain Accuracy, UA: Unlearning Accuracy (lower is better), UES: Unlearning Efficiency Score (higher is better).}
\label{tab:forget_vision_extend}
\centering
\small
\setlength{\tabcolsep}{1.5pt}
\begin{tabular}{lll|cccc|cccc}
\toprule
\multirow{2}{*}{Dataset} & \multirow{2}{*}{Method} & \multirow{2}{*}{Ckpt} & \multicolumn{4}{c|}{15\% Forget} & \multicolumn{4}{c}{30\% Forget} \\
\cmidrule{4-6}\cmidrule{7-11}
& & & TA $\uparrow$ & RA $\uparrow$ & UA $\downarrow$ & UES$\uparrow$ & TA $\uparrow$ & RA $\uparrow$ & UA $\downarrow$  & UES$\uparrow$ \\
\midrule
\multirow{15}{*}{\rotatebox{90}{SVHN}} 
&& $\theta_{\text{pre}}$ &  89.617 & 100.00 & 91.400 & --- &89.617 & 99.99 & 97.506 & ---\\
&& $\theta_{\text{spline}}$ & {92.708} & \textbf{100.00} & \textbf{100.00} &--- &{92.708} & \textbf{100.00} & \textbf{100.00} & ---\\
&\multirow{-3}{*}{Original} & $\theta_{\text{grok}}$ & \textbf{92.778} & \textbf{100.00} & \textbf{100.00} &--- & \textbf{92.778} & \textbf{100.00} & \textbf{100.00}& --- \\
\cmidrule{2-11} 
&Retrain & &  84.607 & 86.628 & 84.000 & ---  & 80.612 & 83.142 &  80.300 & --- \\
\cmidrule{2-11}
& \multirow{3}{*}{GA} & $\theta_{\text{pre}}$ & 24.846 &  29.254 & 11.800 & 0.017 &30.379 & 30.909 &  12.700& 0.021 \\
& & $\theta_{\text{spline}}$ & 14.453 & 14.559 &  15.938 & 0.013 & 21.769 & 19.619 & 13.500&0.015\\
& & $\theta_{\text{grok}}$ & \textbf{48.319} & \textbf{ 48.456} & \textbf{6.400}& \textbf{0.041} & \textbf{38.716} & \textbf{ 39.863} & \textbf{1.700}&\textbf{0.030} \\
\cmidrule{2-11}
& \multirow{3}{*}{Fisher} & $\theta_{\text{pre}}$ & 11.067 &11.689 &0.000& 0.013 & 7.636 & 7.876 & 0.000 & 0.013 \\
& & $\theta_{\text{spline}}$ & 19.683 &  18.037 &  90.800&0.002 & 19.610 & 17.443 & 93.700&0.001\\
& & $\theta_{\text{grok}}$ & \textbf{12.598} & \textbf{ 14.687} & \textbf{0.000}&\textbf{0.015} & \textbf{11.961} & \textbf{ 14.687} & \textbf{0.000}&\textbf{0.015} \\
\cmidrule{2-11}
& \multirow{3}{*}{$\nabla\tau$} & $\theta_{\text{pre}}$ & 33.831 &30.961 &  17.400 & 0.019 & 18.934 &  26.338 & 6.146 & 0.018 \\
& & $\theta_{\text{spline}}$ & 19.587 &18.300& 19.080 & 0.014 & 38.880 &38.324 & 50.500 & 0.015\\
& & $\theta_{\text{grok}}$ & \textbf{56.885} & \textbf{56.359} & \textbf{ 12.000} & \textbf{0.056} & \textbf{29.587} &\textbf{16.383} & \textbf{0.000} & \textbf{0.019}\\
\cmidrule{2-11}
& \multirow{3}{*}{Finetune} & $\theta_{\text{pre}}$ & 99.248 & 99.994 &  98.800 & \textbf{127.955} & 91.391 & 99.994 & 94.600 & 409.526 \\
& & $\theta_{\text{spline}}$ & 99.330 &99.990& 96.200 & -57.384 & 93.588 & 94.875 & 94.300 & -1.264\\
& & $\theta_{\text{grok}}$ & \textbf{99.262} & \textbf{99.996} & \textbf{ 90.599} & -362.469 & \textbf{92.113} & \textbf{99.997} & \textbf{89.000} & \textbf{5513.784} \\
\midrule
\midrule
\multirow{12}{*}{\rotatebox{90}{ImageNet(Top-5)}} 
&& $\theta_{\text{pre}}$ &  73.060 &  98.132 &  98.256 & ---  &  73.060 &  98.543 &  97.852 & ---\\
&& $\theta_{\text{spline}}$ & 89.240 & 100.00 & 100.00 & --- & 89.240 & 100.00 & 100.00& ---\\
&\multirow{-3}{*}{Original} & $\theta_{\text{grok}}$ & \textbf{84.776} & \textbf{100.00} & \textbf{100.00} & --- & \textbf{84.776} &\textbf{100.00} & \textbf{100.00}& --- \\
\cmidrule{2-11}
&Retrain & &  68.476 & 70.816 & 68.400 & ---  & 67.107 & 69.983 & 67.000 & --- \\

\cmidrule{2-11}
& \multirow{3}{*}{GA} & $\theta_{\text{pre}}$ & 34.786 &  43.291 & 13.640  &0.040 & 35.744 & 39.252 &  15.407 & 0.037 \\
& & $\theta_{\text{spline}}$ & 44.514 & 44.391 &  25.138 &0.030 & 15.669 & 15.743 & 10.500 & 0.014\\
& & $\theta_{\text{grok}}$ & \textbf{55.181} & \textbf{ 62.637} & \textbf{6.001} &\textbf{0.085} & \textbf{48.445} & \textbf{ 48.306} & \textbf{6.400}  & \textbf{0.050}\\
\cmidrule{2-11}
& \multirow{3}{*}{$\nabla\tau$} & $\theta_{\text{pre}}$ & 53.831 &63.706 & 12.184 &0.130 & 25.417 & 26.933 & 12.415  & 0.025\\
& & $\theta_{\text{spline}}$ & 38.775 & 38.105 & 11.009 &0.028 & 28.390 &28.823 & 19.500 & 0.019\\
& & $\theta_{\text{grok}}$ & \textbf{58.856} & \textbf{75.569} & \textbf{ 4.810} &\textbf{0.150} &  \textbf{60.655} & \textbf{63.761} & \textbf{ 4.810}  & \textbf{0.109}\\
\cmidrule{2-11}
& \multirow{3}{*}{Finetune} & $\theta_{\text{pre}}$ & 99.210 & 97.719 &  97.296 & \textbf{-0.069} & 89.505 & 98.658 & 92.078 & \textbf{3.038 }\\
& & $\theta_{\text{spline}}$ & 98.335 & 98.743 & 93.289 & -0.587 & 89.707 & 92.903 & 91.954& -2.428 \\
& & $\theta_{\text{grok}}$ & \textbf{97.325} & \textbf{ 98.870 } & \textbf{ 89.014} & -0.775 & \textbf{90.292} & \textbf{98.806} & \textbf{86.302}  & -2.080 \\

\bottomrule
\end{tabular}
\end{table}

\subsubsection{Results and Analysis}
As shown in Table~\ref{tab:forget_vision_extend}, grokked models consistently outperform both pre-grokked and spline-based counterparts across all metrics. Despite comparable predictive performance, spline models struggle with selective forgetting (UA often >90), while grokked models achieve near-zero UA and significantly higher UES scores. For instance, on SVHN with GA at 15\% forget rate, grokked models achieve UES=0.041 compared to 0.017 for pre-grokked and 0.013 for spline models.

The poor unlearning performance of spline models, despite their good accuracy and theoretical flatness, highlights a critical insight: effective unlearning requires not just high accuracy or flat minima, but the specific representational disentanglement that emerges during grokking. While spline models create piecewise affine regions with local flatness, they lack the modular, orthogonal gradient spaces that allow grokked models to selectively modify forget data without affecting retain data.

To quantify these properties, we analyze the geometric relationship between gradients from forget and retain sets. We measure both cosine similarity (gradient correlation) and the corresponding gradient angle between forget and retain sets. Cosine similarity ranges from -1 to 1, with values closer to 0 indicating more orthogonal directions. The gradient angle, derived from the cosine similarity as $\theta = \arccos(\text{similarity})$, provides an intuitive geometric interpretation of this relationship.

When cosine similarity is high (angle is small), updates to increase loss on $D_{\text{forget}}$ significantly interfere with minimizing loss on $D_{\text{retain}}$, making selective forgetting difficult. Conversely, when similarity approaches 0 (angle approaches 90°), the gradient spaces become orthogonal, allowing targeted modifications to forget data with minimal impact on retain data.

\begin{table}[h]
\centering
\small
\caption{Gradient correlation analysis between forget and retain examples. Values represent cosine similarity (correlation) and the corresponding angle between gradient vectors. Lower correlations (higher angles) indicate more orthogonal gradient spaces, enabling more selective unlearning.}
\label{tab:gradient_correlation_svhn}
\begin{tabular}{lllcc}
\toprule
Dataset & Model & Ckpt & Grad Corr. & Grad Angle (°)\\
\midrule
\multirow{3}{*}{SVHN} & \multirow{3}{*}{ResNet}  & $\mathrm{\theta_{pre}}$ & 0.999  & 2.57\\
& & $\mathrm{\theta_{spline}}$ & 0.815  & 35.44\\
& & $\mathrm{\theta_{grok}}$& \textbf{0.426} & \textbf{64.78}\\
\bottomrule
\end{tabular}
\end{table}

As shown in Table~\ref{tab:gradient_correlation}, pre-grokked models exhibit nearly parallel gradients (cosine similarity 0.999, angle 2.57°), indicating severe interference between forget and retain objectives. Spline models show moderate improvement (0.815, 35.44°), but grokked models achieve dramatically more orthogonal gradients (0.426, 64.78°). This orthogonality explains why grokked models can selectively modify forget data without compromising retain performance—they develop modular representations where different data subsets activate distinct parameter subspaces with minimal overlap.

\subsection{Unlearning Completely Random Examples Across All Classes}
To evaluate the generality of our findings under more challenging conditions, we conducted experiments with completely random forget sets. In our main experiments (Table~\ref{tab:forget_vision}), we used a structured unlearning scenario following similar settings in SCRUB \citep{kurmanji2023towards}: we randomly selected two classes from CIFAR-10, sampled 15-50\% of their examples as the forget set, and kept the remaining examples from these classes in the retain set, while the other eight classes served as "bystander" classes. This controlled design allowed us to measure both forgetting effectiveness and collateral damage across clear class boundaries.

In contrast, the experiments presented here use a different unlearning scenario where we randomly sample forget data from all ten CIFAR-10 classes. This means the forget set spans all classes, each class contains both forget and retain examples, and the model must selectively forget specific examples while retaining others from the same class. This represents an example-level unlearning task, where the model must make fine-grained distinctions rather than relying on class boundaries to separate forget from retain data.

\begin{table}[h!]
    \centering
    \caption{Unlearning performance when forget data is randomly sampled across all CIFAR-10 classes. Unlike the main text experiments where forget data came from only two classes, here we randomly sample examples from all ten classes. RA: Retain Accuracy, TA: Test Accuracy, UA: Unlearning Accuracy (lower is better). Even in this more challenging scenario with no class structure to the forget set, grokked models maintain their advantage.}

\begin{tabular}{ll|lll|lll}
\toprule 
& & \multicolumn{3}{c|}{15\% Forget} & \multicolumn{3}{c}{30\% Forget}\\
&          & RA     & TA     & UA     & RA     & TA     & UA   \\
          \hline
\multirow{2}{*}{GA} & $\theta_{\text{pre}}$ & 65.183 & 64.754 & 63.000     & 60.722 & 60.758 & 63.000   \\
&$\theta_{\text{grok}}$   & \textbf{67.436} & \textbf{65.273} & \textbf{61.400}   & \textbf{61.204} & \textbf{61.325} & \textbf{60.000}   \\\hline
\multirow{2}{*}{Fisher} & $\theta_{\text{pre}}$ & 12.629 & \textbf{11.939} & 12.199 & 10.375 & 10.809 & 14.700 \\
&$\theta_{\text{grok}}$   & \textbf{18.319} & 7.599  & \textbf{9.800} & \textbf{12.629} & \textbf{12.800}   & \textbf{9.800} \\
\bottomrule
\end{tabular}
    \label{tab:random_forget}
\end{table}

As shown in Table~\ref{tab:random_forget}, even in this challenging scenario, grokked models maintain their advantage over pre-grokked counterparts. Under Gradient Ascent (GA) with 15\% forget rate, grokked models achieve both lower UA (61.4 vs. 63.0) and higher RA/TA compared to pre-grokked models. The advantage is even more pronounced with Fisher unlearning, where grokked models achieve substantially better UA at 30\% forget rate (9.8 vs. 14.7).

These results demonstrate that grokking's benefits for unlearning are not limited to scenarios where forget data has class-level structure. Rather, the representational modularity and gradient orthogonality induced by grokking enable selective forgetting even at the individual example level, where the model must distinguish between retained and forgotten examples from the same class. This further supports our central claim that grokking fundamentally reorganizes representations in ways that facilitate precise, surgical unlearning.

\subsection{Isolating the Contribution of Loss Landscape Flatness}
A key question is whether grokking's unlearning advantages stem primarily from flatter loss landscapes or from other representational properties. To isolate the contribution of loss landscape geometry, we compared grokked models against those trained with Sharpness-Aware Minimization (SAM) \citep{foret2020sharpness}, an optimizer explicitly designed to find flat minima without inducing the full representational reorganization of grokking.

\begin{table}[h!]
    \centering
    \caption{Unlearning performance comparison between pre-grokked ($\theta_{\text{pre}}$), SAM-trained ($\theta_{\text{SAM}}$) and grokked ($\theta_{\text{grok}}$) models on SVHN. 
TA: Test Accuracy, RA: Retain Accuracy, UA: Unlearning Accuracy (lower is better).}
\label{tab:sam_unlearning}
\begin{tabular}{ll|lll|lll}
\toprule 
& & \multicolumn{3}{c|}{15\% Forget} & \multicolumn{3}{c}{30\% Forget}\\
          & & RA     & TA     & UA     & RA     & TA     & UA   \\
          \hline
\multirow{3}{*}{GA} & $\theta_{\text{pre}}$ & 24.846 &  29.254 & 11.800 &30.379 & 30.909 &  12.700 \\
&$\theta_{\text{SAM}}$   & 47.172 &  40.872 & 8.727   & 33.334 &  31.598 & 9.140    \\
&$\theta_{\text{grok}}$   & \textbf{48.319} & \textbf{48.456} & \textbf{6.400} & \textbf{38.716} & \textbf{39.863} & \textbf{1.700} \\ \hline
 \multirow{3}{*}{$\nabla\tau$} & $\theta_{\text{pre}}$ & 33.831 &30.961 &  17.400 & 18.934 &  26.338 & 6.146 \\
& $\theta_{\text{SAM}}$ &  29.181 & 22.613 & 13.633 & 26.690 &17.820 &  2.288\\
& $\theta_{\text{grok}}$ & \textbf{56.885} & \textbf{56.359} & \textbf{12.000} & \textbf{29.587} &\textbf{16.383} & \textbf{0.000}\\
\bottomrule
\end{tabular}
    \label{tab:sam_unlearning}
\end{table}

As shown in Table~\ref{tab:sam_unlearning}, SAM-trained models ($\theta_{\text{SAM}}$) exhibit improved stability over the pre-grokked baseline ($\theta_{\text{pre}}$), confirming that flatter minima help buffer against catastrophic forgetting. For example, under GA with 15\% forget rate, $\theta_{\text{SAM}}$'s RA (47.17\%) significantly surpasses $\theta_{\text{pre}}$'s RA (24.85\%). However, the grokked checkpoint ($\theta_{\text{grok}}$) consistently outperforms $\theta_{\text{SAM}}$ across all metrics, achieving higher RA and TA and lower UA (e.g., 0.00\% UA using $\nabla\tau$ at 30\% forget).

These results provide compelling evidence that landscape flatness alone is insufficient to explain grokking's full benefits. Rather, the superior performance of $\theta_{\text{grok}}$ arises from the synergistic combination of flat minima and the orthogonal, disentangled representations unique to the grokking regime. This aligns with our gradient correlation and CKA analyses, which identify representational reorganization as a key mechanism underlying grokking's unlearning advantages.

\section{Benchmark Datasets and Methodology}

\subsection{TOFU: A Benchmark for LLM Unlearning}
The Task of Fictitious Unlearning~\citep{tofu2024} is a benchmark specifically designed to evaluate machine unlearning methods in large language models. Unlike vision datasets, where unlearning often involves removing classes or samples, LLM unlearning requires forgetting fine-grained information such as facts, entities, or user-specific data. TOFU addresses this by constructing synthetic author profiles consisting of biographical attributes and question–answer pairs. Because the data is synthetic, it avoids privacy concerns while still mimicking realistic unlearning scenarios.

The benchmark provides pre-specified forget sets (subsets of QA pairs tied to particular author attributes) and retain sets (remaining knowledge), enabling controlled evaluation. It is widely used to test whether unlearning methods can erase target knowledge (reducing memorization of the forget set), preserve unrelated knowledge (maintain performance on retain/test sets), and resist extraction attacks (e.g., extraction strength probes). By standardizing tasks and evaluation metrics, TOFU has become the de facto testbed for assessing the effectiveness, stability, and scalability of unlearning algorithms in the LLM domain.

\subsection{CIFAR-10/100 for Machine Unlearning}
The CIFAR-10 and CIFAR-100 datasets~\citep{krizhevsky2009learning} are standard benchmarks for image classification, widely adopted in unlearning research due to their balanced class structure and moderate difficulty. CIFAR-10 consists of 60,000 images across 10 object categories, while CIFAR-100 extends this to 100 fine-grained categories.

For unlearning studies, these datasets provide a natural setting to test both class-level forgetting (removing entire categories) and sample-level forgetting (removing subsets of images within classes). In particular, selective removal of instances within a class creates a challenging "surgical forgetting" scenario: the model must erase target samples while preserving generalization to other samples of the same class and unrelated categories.

Their moderate size and well-established baselines make CIFAR-10/100 ideal for controlled unlearning experiments, enabling systematic comparisons across algorithms, architectures, and forget rates.

\subsection{Local Grokking: Definition and Identification}
While grokking in vision models is typically observed at the model level (the entire model transitions from overfitting to generalization), we identified a more granular phenomenon in language models that we term "local grokking." This refers to individual examples that exhibit grokking-like learning dynamics within a model that may not show global grokking behavior.

\subsubsection{Definition and Selection Criteria}
We define locally grokked examples as those that achieve effective generalization early in training and maintain stable performance, analogous to the post-grokking state in vision models. Specifically, we identify locally grokked examples using the following procedure:

\begin{itemize}[leftmargin=*]
\item We track the loss trajectory for each example throughout training
\item We identify candidate checkpoints where the model shows reasonable overall performance
\item For each example at these checkpoints, we measure:
   \begin{itemize}
       \item Loss at the checkpoint
       \item Loss change from this checkpoint to the final checkpoint
   \end{itemize}
\end{itemize}

Examples showing minimal loss reduction (<0.01 decrease) between the candidate checkpoint and the final checkpoint are classified as "locally grokked" at that checkpoint—they had already achieved good generalization and maintained stable performance. Conversely, examples showing substantial loss reduction (>0.5 decrease) are classified as "locally ungrokked"—they required additional training to achieve good performance.

This approach allows us to identify examples that exhibit different learning dynamics within the same model, enabling a controlled comparison of their unlearning properties.

\subsection{Complexity Analysis of Locally Grokked Samples}
A potential concern is that locally grokked samples might simply be "easy" examples that the model learns quickly. To investigate this hypothesis, we analyzed linguistic complexity of locally grokked versus ungrokked samples in the TOFU dataset, as shown in Table~\ref{tab:locGrokComplex}.

\begin{table}[h]
    \centering
    \caption{Linguistic Complexity Analysis of Locally Grokked vs. Locally Ungrokked Samples in the TOFU Dataset. Contrary to the "easy sample" hypothesis, locally grokked samples actually exhibit greater linguistic complexity in terms of answer length.}
    \begin{tabular}{r|c c}
    \toprule
         Sample Type & Avg Question Length & Avg Answer Length  \\
         \hline
         Locally Grokked & 66.9 & 157.4 \\
         Locally Ungrokked & 69.3 & 149.6\\
         \bottomrule
    \end{tabular}
    \label{tab:locGrokComplex}
\end{table}

The analysis reveals that locally grokked samples have longer answers (157.4 vs. 149.6 characters), indicating more complex linguistic structures. This contradicts the "easy sample" hypothesis—if anything, locally grokked examples involve more complex content. We also found no significant difference in final loss between the two groups (p=0.31, t-test), further suggesting that the distinction is not simply about example difficulty but about different learning dynamics.

These findings support our interpretation that local grokking reflects a qualitative difference in how the model represents and processes certain examples, rather than merely reflecting example difficulty. This aligns with our broader hypothesis that grokking induces representational changes that facilitate more effective unlearning.

\section{Implementation Details}

\subsection{Experimental Setup}
To validate the efficacy of grokked models for machine unlearning, we conducted experiments across vision and language domains using established architectures and benchmarks. All computations were performed on systems equipped with Intel Core i7-10875H CPUs and NVIDIA RTX 4090 24GB GPUs.

\subsection{Vision Models}
For vision experiments, we employed two standard architectures:
\begin{itemize}[leftmargin=*]
    \item \textbf{ResNet:} We used ResNet-18 architecture for CIFAR-10/100, SVHN, and ImageNet-100 experiments, maintaining the standard configuration.
    \item \textbf{CNN:} We implemented a standard convolutional neural network with 3 convolutional layers followed by 2 fully-connected layers (1.2M parameters total).
\end{itemize}

Training protocols followed standard practices: SGD optimizer with momentum 0.9, weight decay 5e-4, batch size 128, and initial learning rate 0.1 with cosine annealing. For grokking observation, we extended training beyond conventional early stopping points (typically 100-200 epochs) to 500+ epochs, where we consistently observed the characteristic delayed generalization pattern.

\subsection{Language Model}
For language domain experiments, we utilized Phi-1.5~\citep{phi_15}, a decoder-only transformer with approximately 1.3B parameters. Phi-1.5 is trained on a curated mixture of high-quality synthetic and filtered web/textbook data, emphasizing reasoning and factual consistency. Its moderate scale makes it particularly well-suited for controlled unlearning experiments, where repeated fine-tuning and evaluation must be computationally feasible while still exhibiting capabilities representative of larger models.

For fine-tuning on TOFU, we used a learning rate of 5e-5 with AdamW optimizer, batch size 32, and trained for 100 epochs to observe local grokking phenomena.

\subsection{Unlearning Algorithms}
We implemented multiple unlearning algorithms to ensure comprehensive evaluation:

\begin{itemize}[leftmargin=*]
    \item \textbf{Gradient-based methods:} Gradient Ascent (GA), Gradient Ascent with regularization ($\nabla\tau$)
    \item \textbf{Influence-based methods:} Fisher Forgetting, SCRUB
    \item \textbf{Optimization-based methods:} KL-divergence minimization, Preference Optimization (PO), Negative Preference Optimization (NPO)
    \item \textbf{Baseline:} Fine-tuning on retain set
\end{itemize}

For each algorithm, we carefully tuned hyperparameters (learning rates, regularization strengths, iteration counts) to ensure optimal performance. All experiments were repeated with 3 different random seeds, and we report mean performance metrics with standard deviations.

\subsection{Evaluation Metrics}
Given a dataset $D = D_{\text{retain}} \cup D_{\text{forget}} \cup D_{\text{test}}$, we evaluate unlearning performance using multiple metrics:

\subsubsection{Vision Domain Metrics}
For vision tasks, we use three accuracy-based metrics:

\textbf{Unlearning Accuracy (UA).} UA measures how well the model "forgets" the designated forget set. A lower UA indicates better forgetting:
$$\text{UA} = \frac{1}{|D_{\text{forget}}|} \sum_{(x,y) \in D_{\text{forget}}} \mathbf{1}\!\left[\hat{y}(x) = y\right]$$
    
\textbf{Retain Accuracy (RA).} RA measures knowledge preservation on the retained training data:
$$\text{RA} = \frac{1}{|D_{\text{retain}}|} \sum_{(x,y) \in D_{\text{retain}}} \mathbf{1}\!\left[\hat{y}(x) = y\right]$$
    
\textbf{Test Accuracy (TA).} TA measures generalization on an unseen test set:
$$\text{TA} = \frac{1}{|D_{\text{test}}|} \sum_{(x,y) \in D_{\text{test}}} \mathbf{1}\!\left[\hat{y}(x) = y\right]$$

where $\hat{y}(x)$ is the model prediction.

\textbf{Unlearning Efficiency Score (UES).} To capture the trade-off between forgetting effectiveness and knowledge preservation, we introduce UES:
$$\mathrm{UES} = \frac{UA_{o}-UA_{u}}{(TA_{o}-TA_{u})(RA_{o}-RA_{u})}$$
where subscript $o$ denotes original values and $u$ denotes values after unlearning. Higher UES indicates more efficient unlearning.

\subsubsection{Language Domain Metrics}
For language tasks, we use Extraction Strength (ES) metrics following the TOFU benchmark:

\textbf{ES$_{\text{unlearn}}$:} Measures the model's tendency to generate content from the forget set when prompted. Lower values indicate better forgetting.

\textbf{ES$_{\text{retain}}$:} Measures the model's ability to generate content from the retain set when prompted. Higher values indicate better knowledge preservation.

Effective unlearning corresponds to low UA/ES$_{\text{unlearn}}$, while high RA/TA/ES$_{\text{retain}}$ indicate preserved knowledge and generalization ability.

\section{Theoretical Analysis of Gradient Correlation}\label{sec:grad_theory}

We provide a formal analysis connecting modular circuit formation in grokked models to reduced gradient correlation, which mechanistically explains their superior unlearning capabilities.

\subsection{Model Setup and Assumptions}

Consider a neural network with parameters $\theta \in \mathbb{R}^d$ that decomposes into $m$ functional modules after grokking: $\theta = (\theta_1, \ldots, \theta_m)$ where $\theta_i \in \mathbb{R}^{d/m}$ (assuming equal-sized modules for simplicity).

\begin{assumption}[Independent Module Activation]
For any data point $x$, each module $i$ is activated independently with probability $p$:
$$\mathbb{P}(i \in A(x)) = p \quad \text{for all } i \in \{1, \ldots, m\}$$
where $A(x) \subseteq \{1, \ldots, m\}$ denotes the set of activated modules. The expected number of active modules is $\mathbb{E}[|A(x)|] = pm$.
\end{assumption}

This assumption captures the idea that in modular networks, each data point engages a subset of available modules, with the specific subset varying across data points.

\begin{assumption}[Module Independence]
Gradients from different modules are orthogonal:
$$\langle \nabla_{\theta_i} \ell(x; \theta), \nabla_{\theta_j} \ell(x'; \theta) \rangle = 0 \quad \text{for } i \neq j$$
\end{assumption}

This reflects the mechanistic interpretability finding that grokked models develop specialized subcircuits with minimal cross-talk \citep{nanda2023progress, merrill2023tale}.

\begin{assumption}[Uniform Module Contribution]
For an active module $i \in A(x)$:
$$\|\nabla_{\theta_i} \ell(x; \theta)\|^2 = \sigma^2 \cdot \frac{d}{m}$$
and for inactive modules, $\nabla_{\theta_i} \ell(x; \theta) = 0$.
\end{assumption}

\begin{assumption}[Within-Module Correlation]
For two different data points $x, x'$ and module $i$ activated by both:
$$\langle \nabla_{\theta_i} \ell(x; \theta), \nabla_{\theta_i} \ell(x'; \theta) \rangle = \rho \cdot \sigma^2 \cdot \frac{d}{m}$$
where $\rho \in [0, 1]$ represents the within-module gradient correlation between different data points.
\end{assumption}

\subsection{Main Result}

\begin{theorem}[Pairwise Gradient Correlation]
\label{thm:gradient_correlation}
Under Assumptions 1-4, for two randomly sampled data points $x$ and $x'$, the expected gradient correlation is:
$$\mathbb{E}[\text{corr}(\nabla_\theta \ell(x; \theta), \nabla_\theta \ell(x'; \theta))] = p\rho$$
\end{theorem}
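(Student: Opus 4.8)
The plan is to use the modular block structure of $\theta$ to collapse the cosine similarity into a purely combinatorial ratio, and then take the expectation. Write $\nabla_\theta \ell(x;\theta) = \bigoplus_{i=1}^m \nabla_{\theta_i}\ell(x;\theta)$, and for the two sampled points set $X_i = \mathbf{1}[i \in A(x)]$ and $Y_i = \mathbf{1}[i \in A(x')]$. By Assumption 1 and the independence of the two draws, $\{(X_i,Y_i)\}_{i=1}^m$ are i.i.d.\ pairs with $X_i$ independent of $Y_i$, each Bernoulli$(p)$; in particular $\mathbb{E}[X_i]=p$ and $\mathbb{E}[X_iY_i]=p^2$.

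First I would evaluate the numerator and denominator of $\mathrm{corr}$. By the cross-module orthogonality (Assumption 2), $\langle \nabla_\theta\ell(x;\theta),\nabla_\theta\ell(x';\theta)\rangle=\sum_{i=1}^m\langle \nabla_{\theta_i}\ell(x;\theta),\nabla_{\theta_i}\ell(x';\theta)\rangle$; each term is $0$ unless $i\in A(x)\cap A(x')$ (Assumption 3 zeroes inactive blocks) and equals $\rho\,\sigma^2 d/m$ for $i$ in that intersection (Assumption 4), so the inner product is $\rho\,\sigma^2(d/m)\sum_i X_iY_i$. Likewise $\|\nabla_\theta\ell(x;\theta)\|^2=\sigma^2(d/m)\sum_i X_i$, and similarly for $x'$. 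The factor $\sigma^2 d/m$ cancels, giving
\[
\mathrm{corr}\bigl(\nabla_\theta\ell(x;\theta),\nabla_\theta\ell(x';\theta)\bigr)=\rho\,\frac{\sum_{i=1}^m X_iY_i}{\sqrt{\bigl(\sum_{i=1}^m X_i\bigr)\bigl(\sum_{i=1}^m Y_i\bigr)}}.
\]

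The last step is to take $\mathbb{E}$ of this ratio, and this is the one place that needs care, since the expectation of a ratio is not the ratio of expectations. I would argue in the $m\gg 1$ regime used throughout Section 4.1: with $\bar X=m^{-1}\sum_i X_i$, $\bar Y=m^{-1}\sum_i Y_i$, $\overline{XY}=m^{-1}\sum_i X_iY_i$, the strong law gives $\bar X,\bar Y\to p$ and $\overline{XY}\to p^2$ almost surely, so the ratio converges to $p^2/\sqrt{p\cdot p}=p$, and since $\mathrm{corr}$ is bounded by $\rho$, bounded convergence yields $\mathbb{E}[\mathrm{corr}]\to p\rho$. Equivalently, the annealed substitution $\mathbb{E}[\overline{XY}]/\sqrt{\mathbb{E}[\bar X]\,\mathbb{E}[\bar Y]}=p^2/p=p$ returns the stated equality directly. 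I expect this expectation-of-a-ratio issue to be the only genuine obstacle: at finite $m$ there is an $O(1/m)$ Jensen gap coming from $\mathbb{E}\sqrt{\mathrm{Bin}(m,p)}<\sqrt{mp}$, so one must either phrase the theorem asymptotically or absorb the correction via a second-order (delta-method) expansion of the ratio about $(\mathbb{E}\bar X,\mathbb{E}\bar Y,\mathbb{E}\overline{XY})$ together with Bernstein concentration of the three averages; everything upstream of this is routine bookkeeping with Assumptions 1–4.
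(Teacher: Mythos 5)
Your proposal is correct and follows essentially the same route as the paper: the same module-wise decomposition of the gradients, the same use of Assumptions 1--4 to reduce the inner product to $\rho\,\sigma^2(d/m)\sum_i X_iY_i$ and the squared norms to $\sigma^2(d/m)\sum_i X_i$ (resp.\ $\sum_i Y_i$). The one place you diverge is the final step, and there you are in fact more careful than the paper: the paper silently evaluates $\mathbb{E}[\mathrm{corr}]$ as the ratio of expectations $\mathbb{E}[\langle\cdot,\cdot\rangle]\big/\sqrt{\mathbb{E}\|\cdot\|^2\,\mathbb{E}\|\cdot\|^2}$ (your ``annealed substitution''), which gives the stated $p\rho$ exactly but is not literally the expectation of the cosine similarity, whereas you flag the Jensen gap and close it with a law-of-large-numbers plus bounded-convergence argument in the $m\gg 1$ regime, obtaining $p\rho$ asymptotically with an $O(1/m)$ correction at finite $m$ (modulo the measure-zero-in-the-limit event $A(x)=\emptyset$ or $A(x')=\emptyset$, where the cosine is undefined and which occurs with probability $(1-p)^m$). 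So your version proves the theorem as an asymptotic or annealed statement, which is the honest reading of the result the paper intends.
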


\begin{proof}
\textbf{Step 1: Gradient decomposition.}
For data points $x$ and $x'$:
\begin{align}
\nabla_\theta \ell(x; \theta) &= \sum_{i=1}^m \mathbb{I}_{i \in A(x)} \cdot \nabla_{\theta_i} \ell(x; \theta) \\
\nabla_\theta \ell(x'; \theta) &= \sum_{i=1}^m \mathbb{I}_{i \in A(x')} \cdot \nabla_{\theta_i} \ell(x'; \theta)
\end{align}

\textbf{Step 2: Expected inner product.}
By Assumption 2 (module independence), only terms with the same module index contribute:
\begin{equation}
\langle \nabla_\theta \ell(x; \theta), \nabla_\theta \ell(x'; \theta) \rangle = \sum_{i=1}^m \mathbb{I}_{i \in A(x)} \cdot \mathbb{I}_{i \in A(x')} \cdot \langle \nabla_{\theta_i} \ell(x; \theta), \nabla_{\theta_i} \ell(x'; \theta) \rangle
\end{equation}

Taking expectations:
\begin{align}
&\mathbb{E}[\langle \nabla_\theta \ell(x; \theta), \nabla_\theta \ell(x'; \theta) \rangle] \nonumber \\
&= \sum_{i=1}^m \mathbb{E}[\mathbb{I}_{i \in A(x)} \cdot \mathbb{I}_{i \in A(x')}] \cdot \mathbb{E}[\langle \nabla_{\theta_i} \ell(x; \theta), \nabla_{\theta_i} \ell(x'; \theta) \rangle \mid i \in A(x) \cap A(x')]
\end{align}

By Assumption 1 (independent activation): $\mathbb{E}[\mathbb{I}_{i \in A(x)} \cdot \mathbb{I}_{i \in A(x')}] = p^2$

By Assumption 4 (within-module correlation): $\mathbb{E}[\langle \nabla_{\theta_i} \ell(x; \theta), \nabla_{\theta_i} \ell(x'; \theta) \rangle \mid i \in A(x) \cap A(x')] = \rho\sigma^2\frac{d}{m}$

Therefore:
\begin{equation}
\mathbb{E}[\langle \nabla_\theta \ell(x; \theta), \nabla_\theta \ell(x'; \theta) \rangle] = \sum_{i=1}^m p^2 \cdot \rho\sigma^2\frac{d}{m} = p^2\rho\sigma^2 d
\end{equation}

\textbf{Step 3: Expected squared norms.}
For a single data point $x$:
\begin{equation}
\|\nabla_\theta \ell(x; \theta)\|^2 = \sum_{i=1}^m \mathbb{I}_{i \in A(x)} \cdot \|\nabla_{\theta_i} \ell(x; \theta)\|^2 = \sum_{i=1}^m \mathbb{I}_{i \in A(x)} \cdot \sigma^2\frac{d}{m}
\end{equation}

Taking expectations:
\begin{equation}
\mathbb{E}[\|\nabla_\theta \ell(x; \theta)\|^2] = \sum_{i=1}^m p \cdot \sigma^2\frac{d}{m} = p\sigma^2 d
\end{equation}

Similarly, $\mathbb{E}[\|\nabla_\theta \ell(x'; \theta)\|^2] = p\sigma^2 d$.

\textbf{Step 4: Correlation computation.}
\begin{align}
\mathbb{E}[\text{corr}(\nabla_\theta \ell(x; \theta), \nabla_\theta \ell(x'; \theta))] &= \frac{\mathbb{E}[\langle \nabla_\theta \ell(x; \theta), \nabla_\theta \ell(x'; \theta) \rangle]}{\sqrt{\mathbb{E}[\|\nabla_\theta \ell(x; \theta)\|^2]} \cdot \sqrt{\mathbb{E}[\|\nabla_\theta \ell(x'; \theta)\|^2]}} \\
&= \frac{p^2\rho\sigma^2 d}{\sqrt{p\sigma^2 d} \cdot \sqrt{p\sigma^2 d}} = \frac{p^2\rho\sigma^2 d}{p\sigma^2 d} = p\rho
\end{align}
\end{proof}

\begin{corollary}[Aggregate Gradient Correlation]
For large forget and retain sets $D_f$ and $D_r$, the aggregate gradient correlation inherits this pairwise structure:
$$\mathbb{E}[\text{corr}(G_f, G_r)] \approx p\rho$$
where $G_f = \nabla_\theta \mathcal{L}(\theta; D_f)$ and $G_r = \nabla_\theta \mathcal{L}(\theta; D_r)$.
\end{corollary}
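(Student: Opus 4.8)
The plan is to mirror the four-step structure of the proof of Theorem~\ref{thm:gradient_correlation}, replacing each per-example gradient by its set-aggregated counterpart. Write $n_f = |D_f|$, $n_r = |D_r|$, and $G_f = \frac{1}{n_f}\sum_{x \in D_f}\nabla_\theta \ell(x;\theta)$, $G_r = \frac{1}{n_r}\sum_{x' \in D_r}\nabla_\theta \ell(x';\theta)$; since cosine similarity is invariant under positive rescaling this is equivalent to working with the raw sums.

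First I would handle the numerator. By bilinearity of the inner product and linearity of expectation, $\mathbb{E}[\langle G_f, G_r\rangle] = \frac{1}{n_f n_r}\sum_{x \in D_f}\sum_{x' \in D_r} \mathbb{E}[\langle \nabla_\theta \ell(x;\theta), \nabla_\theta \ell(x';\theta)\rangle]$. Because $D_f$ and $D_r$ are disjoint, every pair $(x,x')$ in this double sum is a pair of \emph{distinct} data points, so Step~2 of the proof of Theorem~\ref{thm:gradient_correlation} applies verbatim and gives $p^2\rho\sigma^2 d$ for each term; summing and dividing by $n_f n_r$ yields $\mathbb{E}[\langle G_f, G_r\rangle] = p^2\rho\sigma^2 d$, precisely the cross-pair value from the theorem.

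Next I would compute the denominator. Expanding $\|G_f\|^2 = \frac{1}{n_f^2}\bigl(\sum_{x}\|\nabla_\theta \ell(x;\theta)\|^2 + \sum_{x \ne x'}\langle \nabla_\theta \ell(x;\theta), \nabla_\theta \ell(x';\theta)\rangle\bigr)$, Assumption~3 contributes $\mathbb{E}\|\nabla_\theta \ell(x;\theta)\|^2 = p\sigma^2 d$ to each of the $n_f$ diagonal terms (summing $p\cdot\sigma^2 d/m$ over $m$ modules, exactly as in Step~3 of the theorem), while the $n_f(n_f-1)$ off-diagonal terms reuse Step~2 as above; the same holds for $G_r$. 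The \emph{main obstacle}, and the step I would flag explicitly, is passing from this exact expression to the claimed normalization. The honest statement is that $\mathbb{E}[\text{corr}(G_f,G_r)]$ equals the ratio of the cross-pair correlation $p^2\rho\sigma^2 d$ to a normalizer that interpolates between the single-example scale $p\sigma^2 d$ (when the diagonal dominates) and the fully-aligned scale $p^2\rho\sigma^2 d$ (when the set-level gradients have collapsed onto the shared within-module direction), so that $p\rho \le \mathbb{E}[\text{corr}(G_f,G_r)] \le 1$. The corollary's ``$\approx p\rho$'' is then the regime in which the diagonal term governs the normalizer — i.e., the aggregate gradient still behaves like a typical per-example gradient rather than the common component — in which case the ratio reduces to $p^2\rho\sigma^2 d / (p\sigma^2 d) = p\rho$ and the aggregate correlation inherits the pairwise structure.

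Accordingly, I would present the result in two complementary forms: (i) the two-sided bound $p\rho \le \mathbb{E}[\text{corr}(G_f,G_r)] \le 1$, which follows unconditionally from the computations above, together with the approximation $\mathbb{E}[\text{corr}(G_f,G_r)] \approx p\rho$ under the stated regime; and (ii) the fully rigorous restatement that the expected cosine similarity between the gradient of a uniformly random forget example and that of a uniformly random retain example is \emph{exactly} $p\rho$ by Theorem~\ref{thm:gradient_correlation}, which is the sense in which the pairwise structure propagates to the sets. The routine verifications — linearity of expectation, the bilinear expansion, and substituting Assumptions~1--4 — I would leave to the reader as they replicate the corresponding steps of the theorem.
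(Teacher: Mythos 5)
The paper gives no proof of this corollary at all --- it is simply asserted as an inheritance of Theorem~\ref{thm:gradient_correlation} --- so your derivation is not paralleling a hidden argument; it is supplying one, and your computation is correct as far as it goes. The cross-term calculation $\mathbb{E}[\langle G_f, G_r\rangle] = p^2\rho\sigma^2 d$ and the expansion of $\mathbb{E}\|G_f\|^2$ into $n_f$ diagonal terms of size $p\sigma^2 d$ plus $n_f(n_f-1)$ off-diagonal terms of size $p^2\rho\sigma^2 d$ are exactly right, and you correctly note (as the paper's own Step 4 does) that one is taking a ratio of expectations rather than an expectation of a ratio.

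The one point you should state more bluntly, because it is the crux: under the stated assumptions, the regime in which ``$\approx p\rho$'' holds is the \emph{opposite} of the corollary's hypothesis. Your own expansion gives $\mathbb{E}\|G_f\|^2 = \tfrac{1}{n_f}p\sigma^2 d + \tfrac{n_f-1}{n_f}p^2\rho\sigma^2 d$, so for \emph{large} $n_f, n_r$ the off-diagonal (shared-direction) term dominates, the normalizer tends to $p^2\rho\sigma^2 d$, and the aggregate correlation tends to $1$, not $p\rho$. The diagonal-dominated regime you invoke to recover $p\rho$ is the small-set regime, or requires the additional (unstated) assumption that within-set cross-correlations are negligible compared to the per-example norms --- e.g., treating Assumption~4's $\rho$ as applying only across the forget/retain split, not within each set. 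So the corollary as literally phrased does not follow from Assumptions 1--4; what does follow rigorously is your form (ii), that the expected cosine similarity between a random forget-example gradient and a random retain-example gradient is exactly $p\rho$, which is also the quantity the paper actually measures empirically in Table~\ref{tab:gradient_correlation}. Your two-sided bound $p\rho \le \mathbb{E}[\mathrm{corr}(G_f,G_r)] \le 1$ (in the ratio-of-expectations sense) is the honest aggregate statement. In short: your proof is sound, it is more careful than the paper, and its main value is precisely that it exposes the ``large sets'' qualifier in the corollary as unsupported by the model.
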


\subsection{Interpretation and Empirical Validation}

\paragraph{Pre-grokking (Monolithic Network):} When $m \approx 1$ (effectively a single module), we have $p \approx 1$, giving:
$$\text{corr} \approx \rho \approx 1$$
This matches our empirical observations (Table 3: correlation = 0.990-0.999), indicating that all parameters contribute to all predictions with high correlation. In this regime, unlearning algorithms cannot selectively modify parameters without affecting both forget and retain data.

\paragraph{Post-grokking (Modular Network):} With large $m$ and sparse activation, if each data point activates $k$ modules on average, then $p = k/m$:
\begin{itemize}
    \item \textbf{Constant $k$:} As $m$ grows, $p \to 0$, giving $\text{corr} \to 0$
    \item \textbf{$k = O(\sqrt{m})$:} Then $p = O(1/\sqrt{m})$, giving $\text{corr} = O(1/\sqrt{m})$
\end{itemize}

\paragraph{Empirical Validation:} Our gradient correlation measurements (Table 3) show:
\begin{itemize}
    \item Pre-grokked models: correlation = 0.990-0.999 $\approx 1$
    \item Grokked models: correlation = 0.426-0.521 $\approx 0.45$
\end{itemize}

Using the formula $p\rho \approx 0.45$, and assuming $\rho \approx 0.9$ (high within-module correlation for similar data points from the same distribution), we estimate:
$$p \approx \frac{0.45}{0.9} = 0.5$$

This suggests that in grokked models, approximately 50\% of modules are activated per data point, indicating moderate modular specialization. The network has developed distinct functional modules, creating orthogonal gradient spaces that enable selective unlearning with minimal interference between forget and retain sets.

\paragraph{Connection to Unlearning:} This gradient orthogonality ($p\rho < 1$) is precisely what enables effective unlearning. When gradient updates for forget data are approximately orthogonal to gradients for retain data, unlearning algorithms can increase loss on $D_{\text{forget}}$ while minimizing collateral damage to $D_{\text{retain}}$. The reduction from correlation $\approx 1$ (pre-grokking) to $\approx 0.45$ (post-grokking) represents a fundamental shift in the optimization landscape that explains our empirical observations of 40-90\% better unlearning effectiveness in grokked models.

\end{document}